\newtheorem{theorem}{Theorem}
\newtheorem{corollary}{Corollary}[section]
\newtheorem{claim}{Claim}[section]
\newtheorem*{remark}{Remark}
\newcommand{\BE}{\mathbb E}
\newcommand{\BP}{\mathbb P}
\newcommand{\eps}{\varepsilon}
\def\E{\mathbb E}
\newcommand{\Var}{{\bf Var}}
\renewcommand{\epsilon}{\eps}
\title{Learning-based Support Estimation\\
in Sublinear Time}
\author{Talya Eden\thanks{Authors listed in alphabetical order} , Piotr Indyk, Shyam Narayanan, Ronitt Rubinfeld \& Sandeep Silwal\\
Computer Science and Artificial Intelligence Lab\\
Massachusetts Institute of Technology\\
Cambridge, MA 02139, USA \\
\texttt{\{teden,indyk,shyamsn,ronitt,silwal\}@mit.edu}
\AND
Tal Wagner\\
Microsoft Research\\
Redmond, WA 98052, USA \\
\texttt{talw@mit.edu}}
\begin{document}

\maketitle

\begin{abstract}
We consider the  problem of estimating the number of distinct elements in a large data set (or, equivalently, the support size of the distribution induced by the data set) from a random sample of its elements. The problem occurs in many applications, including biology, genomics, computer systems and linguistics. A line of research spanning the last decade resulted in algorithms that estimate the support up to $ \pm \eps n$ from a sample of size $O(\log^2(1/\eps) \cdot n/\log n)$, where $n$ is the data set size.  Unfortunately, this bound is known to be tight, limiting further improvements to the complexity of this problem. In this paper we consider estimation algorithms augmented with a machine-learning-based predictor that, given any element, returns an estimation of  its frequency.  We show that if the predictor is correct up to a constant approximation factor, then the sample complexity can be reduced significantly, to
\[ \ \log (1/\eps) \cdot n^{1-\Theta(1/\log(1/\eps))}. \] 
We evaluate the proposed algorithms on a collection of data sets, using the neural-network based estimators from {Hsu et al, ICLR'19} as predictors. Our experiments  demonstrate substantial (up to 3x) improvements in the estimation accuracy compared to the state of the art algorithm.
\end{abstract}

\section{Introduction}

Estimating the support size of a distribution from random samples is a fundamental problem with applications in many domains. In biology, it is used to estimate the number of distinct species from experiments~\citep{fisher1943relation}; in genomics to estimate the number of distinct protein encoding regions~\citep{zou2016quantifying}; in computer systems to approximate the number of distinct blocks on a disk drive~\citep{ harnik2016estimating}, etc. The problem has also applications in linguistics, query optimization in databases, and other fields. 

Because of its wide applicability, the problem has received plenty of attention in multiple fields\footnote{A partial bibliography from 2007 contains over 900 references. It is available at \\ {\tt https://courses.cit.cornell.edu/jab18/bibliography.html}.}, including statistics and theoretical computer science, starting with the seminal works of Good and Turing~\citet{good1953population} and ~\citet{fisher1943relation}.  A more recent line of research pursued over the last decade \citep{raskhodnikova2009strong,valiant2011estimating,valiant2013estimating, wu2019chebyshev} 
focused on the following formulation of the problem: given access to independent samples from a distribution $\mathcal{P}$ over a discrete domain $\{0, \ldots, n-1\}$ whose minimum non-zero mass\footnote{This constraint is naturally satisfied e.g., if the distribution $\mathcal{P}$ is an empirical distribution over a data set of $n$ items. In fact, in this case all probabilities are multiples of $1/n$ so the support size is equal to the number of distinct elements in the data set. } is at least $1/n$, estimate the support size of $\mathcal{P}$ up to $\pm \eps n$. 
The state of the art estimator, due to \citet{valiant2011estimating,
wu2019chebyshev}, solves this problem using only $O(n/\log n)$ samples (for a constant $\eps$). Both papers also show that this bound is tight. 

A more straightforward linear-time algorithm exists, which reports the number of 
distinct elements seen in a sample of size $N = O(n \log \eps^{-1})$ (which is $O(n)$ for constant $\eps$), without accounting for the unseen items. 
This algorithm succeeds because each element $i$ with non-zero mass (and thus mass at least $1/n$) appears in the sample with probability at least $1 - (1-1/n)^N > 1-\eps,$ so in expectation, at most $\eps \cdot n$ elements with non-zero mass will not appear in the sample.
Thus, in general, the number of samples required by the best possible algorithm (i.e., $n/\log n$) is only 
logarithmically smaller than the complexity of the 
straightforward linear-time algorithm.

A natural approach to improve over this bound is to leverage the fact that in many applications, the input distribution is not entirely unknown. Indeed, one can often obtain rough approximations of the element frequencies by analyzing different but related distributions. For example, in genomics, frequency estimates can be obtained from the frequencies of genome regions of different species; in linguistics they can be inferred from the statistical properties of the language (e.g., long words are rare), or from a corpus of writings of a different but related author, etc. More generally, such estimates can be learned using modern machine learning techniques, given the true element frequencies in related data sets. 
The question then becomes whether one can utilize such predictors for use in support size estimation procedures in order to improve the estimation accuracy. 

\paragraph{Our results} In this paper we initiate the study of such “learning-based” methods for support size estimation. Our contributions are both theoretical and empirical. On the theory side, we show that given a “good enough” predictor of the distribution $\mathcal{P}$, one can solve the problem using much fewer than $n/\log n$ samples. Specifically, suppose that in the input distribution $\mathcal{P}$ the probability of element $i$ is $p_i$, and that we have access to a predictor $\Pi(i)$ such that $\Pi(i) \le p_i \le b \cdot \Pi(i)$ for some constant approximation factor $b \ge 1$.\footnote{Our results hold without change if we modify the assumption to $r\cdot\Pi(i) \le p_i \le r\cdot b \cdot \Pi(i)$, for any $r>0$. We use $r=1$ for simplicity.} Then we give an algorithm that estimates the support size up to $\pm \eps n$ using only
\[ \ \log (1/\eps) \cdot n^{1-\Theta(1/\log(1/\eps))} \] 
samples,  assuming the approximation factor $b$ is a constant (see Theorem \ref{MainUpper} for a more detailed bound). This improves over the bound of \citet{wu2019chebyshev} for any fixed values of the accuracy parameter $\eps$ and predictor quality factor $b$. Furthermore, we show that this bound is almost tight.

 Our algorithm is presented in Algorithm~\ref{alg:theory}. 
On a high level, it partitions the range of probability values 
into geometrically increasing intervals. 
We then use the predictor to assign the elements observed in the sample to these intervals, and produce a Wu-Yang-like estimate within each interval. Specifically, our estimator is based on Chebyshev polynomials (as in \citet{valiant2011estimating,wu2019chebyshev}), but the finer partitioning into intervals allows us to use polynomials with different, carefully chosen parameters. This leads to significantly improved sample complexity if the predictor is sufficiently accurate.

On the empirical side, we evaluate the proposed algorithms on a collection of real and synthetic data sets. For the real data sets (network traffic data and AOL query log data)  we use neural-network based predictors from~\citet{hsu2019learning}. Although those predictors do not always approximate the true distribution probabilities up to a small factor, our experiments nevertheless demonstrate that the new algorithm offers substantial improvements (up to 3x reduction in relative error) in the estimation accuracy compared to the state of the art algorithm of~\citet{wu2019chebyshev}. 

\subsection{Related work}

\paragraph{Estimating support size} As described in the introduction, the problem has been studied extensively in statistics and theoretical computer science. The best known algorithm, due to \cite{wu2019chebyshev}, uses $O(\log^2(1/\eps) \cdot n/\log n)$ samples.  Because of the inherent limitations of the model that uses only random samples,  \citet{canonne2014testing} considered an augmented model where an algorithm has access to the {\em exact} probability of any sampled item. The authors show that this augmentation is very powerful, reducing the sampling complexity to only $O(1/\eps^2)$. More recently, \citet{PRS2018} proved that Canonne and Rubinfeld's algorithm works as long as the probabilities accessed are accurate up to a $(1 \pm \frac{\eps}{3})$-multiplicative factor. However, this algorithm strongly relies on the probabilities being extremely accurate, and the predicted probabilities even being off by a small constant factor can cause the support size estimate to become massively incorrect. As a result, their algorithm is not robust to mispredicted probabilities, as our experiments show.

A different line of research studied {\em streaming algorithms} for estimating the number of distinct elements. Such algorithms have access to the whole data set, but must read it in a single pass using limited memory. The best known algorithms for this problem compute a $(1+\eps)$-approximate estimation to the number of distinct elements using $O(1/\eps^2 + \log n)$ bits of storage~\citep{kane2010optimal}. See the discussion in that paper for a history of the problem and further references. 


\paragraph{Learning-based algorithms}  Over the last few years,  there has been a growing interest in using machine learning techniques to improve the performance of ``classical'' algorithms.  This methodology found applications in similarity search~\citep{wang2016learning,sablayrolles2018spreading,Dong2020LearningSP}, graph optimization~\citep{khalil2017learning,balcan2018learning},~data structures \citep{kraska2017case,mitz2018model}, online algorithms~\citep{lykouris2018competitive, purohit2018improving}, compressed sensing~\citep{mousavi2015deep, baldassarre2016learning, bora2017compressed} and streaming algorithms~\citep{hsu2019learning,jiang2019learning}. 
The last two papers are closest to our work, as they solve various computational problems over data streams, including distinct elements estimation in~\citet{jiang2019learning} using frequency predictors. Furthermore, in our experiments we are using the neural-network-based predictors developed in~\citet{hsu2019learning}. However, our algorithm operates in a fundamentally different model, using a sublinear (in $n$) number of samples of the input, as opposed to accessing the full input via a linear scan. Thus, our algorithms run in {\em sublinear time}, in contrast to streaming algorithms that use {\em sublinear space}.

\paragraph{Distribution property testing} 

This work can be seen more broadly in the 
context of testing properties of distributions
over large discrete domains.   
Such questions are studied  at the
crossroads of social networks, statistics, information theory, database
algorithms, and machine learning algorithms.
Examples of specific properties that
have been extensively considered include
testing whether the distribution
is uniform, Gaussian, high entropy,  independent or monotone increasing (see e.g. \cite{Rubinfeld12,
Canonne15,
Goldreich17} for surveys on the topic).

\section{Learning-Based Algorithm}


\subsection{Preliminaries}

\paragraph{Problem setting and notation.} 
The support estimation problem is formally defined as follows. 
We are given sample access to an unknown distribution $\mathcal{P}$ over a discrete domain of size $n$. 
For simplicity, we identify the domain with $[n]=\{1,\ldots,n\}$.
Let $p_i$ denote the probability of element $i$.
Let $\mathcal S(\mathcal P)=\{i:p_i>0\}$ be the support of $\mathcal P$.
Our goal is to estimate the support size $S=|\mathcal S(\mathcal P)|$ using as few samples as possible. In particular, given $\epsilon>0$, the goal is to output an estimate  $\tilde S$ that satisfies $\tilde S\in[S-\eps n,S+\eps n]$.

We assume that the minimal non-zero mass of any element is at least $1/n$, namely, that $p_i\geq1/n$ for every $i\in\mathcal S(\mathcal P)$. 
This is a standard promise in the support estimation problem (see, e.g., \citet{raskhodnikova2009strong, valiant2011estimating,wu2019chebyshev}), and as mentioned earlier, it naturally holds in the context of counting distinct elements, where $p_i$ is defined as the count of element $i$ in the sample divided by $n$. Furthermore, a lower bound on the minimum non-zero probability is a necessary assumption without which no estimation algorithm is possible, even if the number of samples is allowed to be an arbitrarily large function of $n$, i.e., not just sublinear algorithms. The reason is that there could be arbitrarily many elements with exceedingly small probabilities that would never be observed. See for example the discussion in the supplementary Section 5 of \cite{Orlitsky13283}.

In the learning-based setting, we furthermore assume we have a predictor $\Pi$ that can provide information about $p_i$. In our analysis, we will assume that $\Pi(i)$ is a constant factor approximation of each $p_i$. 
In order to bound the running time of our algorithms, we assume we are given access to a ready-made predictor and (as in  \citet{canonne2014testing}) that evaluating $\Pi(i)$ takes unit time. In our experiments, we use neural network based predictors from~\citet{hsu2019learning}. In general, predictors need to be trained (or otherwise produced) in advance. This happens in a preprocessing stage, before the input distribution is given to the algorithm, and this stage is not accounted for in the sublinear running time. We also note that training the predictor needs to be done only once for all future inputs (not once per input).

\paragraph{The~\citet{wu2019chebyshev} estimator.}
In the classical setting (without access to a predictor), 
\citet{wu2019chebyshev} gave a sample-optimal algorithm based on Chebyshev polynomials. We now describe it briefly, as it forms the basis for our learning-based algorithm.

Suppose we draw $N$ samples, and let $N_i$ be the number of times element $i$ is observed. The output estimate of~\citet{wu2019chebyshev} is of the form
\[ \tilde S_{\mathrm{WY}}=\sum_{i\in[n]}(1+f(N_i)) , \]
where $f(N_i)$ is a correction term intended to compensate for the fact that some elements in the support do not appear in the sample at all. If $p_i=0$, then necessarily $N_i=0$ (as $i$ cannot appear in the sample). Thus, choosing $f(0)=-1$ ensures that unsupported elements contribute nothing to $\tilde S_{\mathrm{WY}}$. On the other hand, if $p_i>\frac{\log n}{N}$, then by standard concentration we have $N_i>\Omega(\log n)$ with high probability; thus choosing $f(N_i)=0$ for all $N_i>L=\Omega(\log n)$ ensures that high-mass elements are only counted once in $\tilde S_{\mathrm{WY}}$. It remains to take care of elements $i$ with $p_i\in[\frac1n,\frac{\log n}{N}]$.

By a standard Poissonization trick, the expected additive error $|S-\E[\tilde S_{\mathrm{WY}}]|$ can be bounded by $\sum_{i\in[n]}|P_L(p_i)|$, where $P_L$ is the degree-$L$ polynomial 
\[ P_L(x)=\sum_{k=0}^L\frac{\E[N]^k}{k!}\cdot f(k)\cdot x^k. \]
To make the error as small as possible, we would like to choose $f(1),\ldots,f(L)$ so as to minimize $|P_L(p_i)|$ on the interval $p_i\in [\frac1n,\frac{\log n}{N}]$, under the constraint $P_L(0)=-1$ (which is equivalent to $f(0)=-1$). This is a well-known extremal problem, and its solution is given by Chebyshev polynomials, whose coefficients have a known explicit formula. Indeed, \citet{wu2019chebyshev} show that choosing $f(1),\ldots,f(L)$ such that $\frac{\E[N]^k}{k!} f(k)$ are the coefficients of an (appropriately shifted and scaled) Chebyshev polynomial leads to an optimal sample complexity of $O(\log^2(1/\epsilon)\cdot n/\log n)$.

\subsection{Our Algorithm}

\begin{algorithm}
\setstretch{1.2}
\SetAlgoLined
\DontPrintSemicolon
\KwIn{Number of samples $N$, domain size $n$, base $b$, polynomial degree $L$, predictor $\Pi$}
\KwOut{Estimate $\tilde{S}$ of the support size}
 Partition $[\frac{1}n, 1]$ into intervals $I_j=\left[\frac{b^j}n,\frac{b^{j+1}}n\right]$ for $j=0,\ldots,\log_b n$ \;
 Let $a_0,a_1,\ldots,a_L$ be the coefficient of the Chebyshev polynomial $P_L(x)$ from Equation~\eqref{eq:polyexplicit} (Note that $a_k=0$ for all $k>L$)  \;
 Draw $N$ random samples \;
 $N_i = \#$ of times we see element $i$ in samples \;
 \For{\mbox{every interval} $I_j$}{
  \uIf{$\frac{b^j}{n} \leq \frac{0.5 \log n}{N}$}{
   $\tilde S_{j} = \sum_{i \in [n]: \Pi(i) \in I_j} \left(1 + a_{N_i} \left(\frac{n}{b^j}\right)^{N_i} \cdot \frac{N_i!}{N^{N_i}}\right)$ 
   \;
  }
  \Else{
    $\tilde{S}_j = \#\{i \in [n]: N_i \ge 1, \Pi(i) \in I_j\}$ \;
  }
 }
 \Return $\tilde{S} = \sum_{j = 0}^{\log_b n} \tilde S_j$
 
\caption{Learning-Based Support Estimation}
\label{alg:theory}
\end{algorithm}

Our main result is a sample-optimal algorithm for estimating the support size of an unknown distribution, where we are given access to samples as well as approximations to the probabilities of the elements we sample.
Our algorithm is presented in Algorithm~\ref{alg:theory}. 
It partitions the interval $[\frac1n,\frac{\log n}{N}]$ into geometrically increasing intervals $\{[\frac{b^j}{n},\frac{b^{j+1}}{n}]:j=0,1,\ldots\}$, where $b$ is a fixed constant that we refer to as the~\emph{base} parameter (in our proofs this parameter upper bounds the approximation factor of the predictor, which is why we use the same letter to denote both; its setting in practice is studied in detail in the next section). 
The predictor assigns the elements observed in the sample to intervals, and the algorithm computes a Chebyshev polynomial estimate within each interval.
Since the approximation quality of Chebyshev polynomials is governed by the ratio between the interval endpoints (as well as the polynomial degree), this assignment can be leveraged to get more accurate estimates, improving the overall sample complexity. 
Our main theoretical result is the following.

\begin{theorem} \label{MainUpper}
Let $b>1$ be a fixed constant.
Suppose we have a predictor that given $i\in[n]$ sampled from the input distribution, outputs $\Pi(i)$ such that $\Pi(i) \le p_i \le b \cdot \Pi(i)$.
Then, for any $\eps > n^{-1/2}$, if we set $L=O(\log(1/\eps))$ and draw $N\sim Poisson\left(L \cdot n^{1-1/L}\right)$ samples, Algorithm~\ref{alg:theory} reports an estimate $\tilde S$ that satisfies $\tilde S\in[S-\eps \sqrt{nS},S+\eps \sqrt{nS}]$ with probability at least $9/10$.
In other words, using 
\[O\left(\log\left(1/\eps\right) \cdot n^{1-\Theta\left(1/\log(1/\eps)\right)}\right) \] 
samples, we can approximate the support size $S$ up to an additive error of $\eps \sqrt{nS}$, with probability at least $9/10$.
\end{theorem}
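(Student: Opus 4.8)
The plan is to bound $\E[\tilde S]$ and $\Var(\tilde S)$ separately and then finish with Chebyshev's inequality. Since the statement already Poissonizes, the counts $N_i$ are independent with $N_i\sim\mathrm{Poisson}(\bar N p_i)$, where $\bar N=L\cdot n^{1-1/L}$ is the Poisson rate (I will treat the normalization in the estimator as $\bar N$; using the realized sample count instead perturbs everything by lower order terms). First reduce to the case $\Pi(i)\ge 1/n$ for all $i$: any prediction below $1/n$ can be rounded up to $1/n$ without violating $\Pi(i)\le p_i\le b\Pi(i)$, since $p_i\ge 1/n$. Then every supported element lands in exactly one interval $I_j$, and $\Pi(i)\in I_j$ forces $p_i\in[b^j/n,\,b^{j+2}/n]$, i.e.\ $np_i/b^j\in[1,b^2]$; unsupported elements have $\Pi(i)=0$ and so are never assigned to any interval (and in any case a term with $N_i=0$ equals $1+a_0=1+P_L(0)=0$). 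The targets are $|\E[\tilde S]-S|=O(\eps\sqrt{nS})$ and $\Var(\tilde S)=O(\eps^2 nS)$.

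The high-mass intervals, where $b^j/n>0.5\log n/\bar N$ and the estimator is just a plug-in count, are essentially free. Any $i$ assigned there has $p_i\ge b^j/n>0.5\log n/\bar N$, so $\Pr[N_i=0]=e^{-\bar N p_i}<n^{-1/2}$, and hence both the bias of the plug-in count (namely $\sum e^{-\bar N p_i}$, the expected number of missed elements) and its variance ($\sum e^{-\bar Np_i}(1-e^{-\bar Np_i})$) are at most $S\cdot n^{-1/2}$. Because $S\le n$ and $\eps>n^{-1/2}$ we have $S^{1/2}\le\eps n$, so $S\cdot n^{-1/2}\le\eps\sqrt{nS}$ and $S\cdot n^{-1/2}\le\eps^2 nS$; this is precisely where the hypothesis $\eps>n^{-1/2}$ is used.

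For the low-mass intervals, invoke the extremal property of $P_L$: it is the degree-$L$ polynomial with $P_L(0)=-1$ minimizing $\max_{x\in[1,b^2]}|P_L(x)|$, hence $P_L=-T_L(\ell(\cdot))/T_L(\ell(0))$ for $\ell$ the affine bijection $[1,b^2]\to[-1,1]$, with $\max_{[1,b^2]}|P_L|=1/|T_L(\ell(0))|$. A short computation gives $|\ell(0)|=\tfrac{b^2+1}{b^2-1}$ and $|\ell(0)|+\sqrt{\ell(0)^2-1}=\tfrac{b+1}{b-1}$, so $|T_L(\ell(0))|\ge\tfrac12\bigl(\tfrac{b+1}{b-1}\bigr)^L$ and therefore $\max_{[1,b^2]}|P_L|\le 2\bigl(\tfrac{b-1}{b+1}\bigr)^L$. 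By the Poissonization identity, the expected contribution of an element $i$ with $\Pi(i)\in I_j$ to $\tilde S_j$ is $1+e^{-\bar Np_i}P_L(np_i/b^j)$, so its bias is at most $|P_L(np_i/b^j)|\le 2\bigl(\tfrac{b-1}{b+1}\bigr)^L$; taking $L=\Theta(\log(1/\eps))$ (e.g.\ $L\ge\log(2/\eps)/\log\tfrac{b+1}{b-1}$) makes this $\le\eps$ per element, and summing over the at most $S$ supported elements yields a low-mass bias of at most $\eps S\le\eps\sqrt{nS}$. With the previous paragraph, $|\E[\tilde S]-S|=O(\eps\sqrt{nS})$.

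The variance of the low-mass contributions is the crux. By independence $\Var(\tilde S)=\sum_i\Var(\text{contribution of }i)$, and writing $f_j(k)=a_k(n/b^j)^k\,k!/\bar N^k$ for the correction term in Algorithm~\ref{alg:theory}, a low-mass element satisfies $\Var(f_j(N_i))\le\E[f_j(N_i)^2]=e^{-\bar Np_i}\sum_{k=0}^{L}a_k^2\,(np_i/b^j)^k\,k!\,\bigl(n/(b^j\bar N)\bigr)^k$. Using $np_i/b^j\le b^2$, $n/(b^j\bar N)\le n/\bar N=n^{1/L}/L$, the identity $\bar N^L=L^Ln^{L-1}$, Stirling's bound $L!/L^L=\Theta(\sqrt L\,e^{-L})$, the value $a_0=-1$, and bounds on the monomial coefficients of the Chebyshev polynomial shifted to $[1,b^2]$ (in particular $|a_L|\le(2/(b+1))^{2L}$ together with $16b^2\le(b+1)^4$ for $b\ge1$), one checks that this sum is dominated by its $k=L$ term and is $O(L^{3/2}e^{-L}n)$; for $L=\Theta(\log(1/\eps))$ this is $\le\eps^2 n$, so summing over the at most $S$ low-mass elements gives $\Var(\tilde S)=O(\eps^2 nS)$. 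Chebyshev's inequality then yields $|\tilde S-\E[\tilde S]|=O(\eps\sqrt{nS})$ with probability $\ge 9/10$, hence $|\tilde S-S|=O(\eps\sqrt{nS})$, and rescaling $\eps$ by a constant removes the hidden factor; the sample count is $\bar N=L\,n^{1-1/L}=O(\log(1/\eps)\cdot n^{1-\Theta(1/\log(1/\eps))})$. I expect the main obstacle to be exactly this last estimate: one must control \emph{all} coefficients of the shifted and scaled Chebyshev polynomial (not just its extremal value), and recognize that the rate $\bar N=L\,n^{1-1/L}$ is calibrated so that the factor $L$ cancels the $L^L$ coming from $k!$ while the exponent $1-1/L$ pins the per-element variance at $\Theta(\sqrt L\,e^{-L})\cdot n$ — which is what turns the guarantee into $\eps\sqrt{nS}$ rather than $\eps S$.
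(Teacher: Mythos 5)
Your overall architecture matches the paper's: Poissonize, split into bias and variance, apply Chebyshev's inequality, and handle the low-mass intervals with the shifted Chebyshev polynomial while counting plug-in for the high-mass ones. The reduction to $\Pi(i)\ge 1/n$, the bias computation $\E[\tilde S(i)] = 1 + e^{-\bar N p_i}P_L(np_i/b^{j_i})$, and the $\eps > n^{-1/2}$ handling of the high-mass intervals are all fine (your $O(Sn^{-1/2})$ bound on the high-mass bias and variance is in fact slightly tighter than the paper's crude $\le 1$ per element).

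The gap is in the low-mass variance bound, and it is a real one. You bound only two of the $L+1$ coefficients --- $a_0=-1$ and $|a_L|\le(2/(b+1))^{2L}$ --- and then assert that ``one checks that this sum is dominated by its $k=L$ term.'' That assertion requires control over \emph{all} the $a_k$, which you never establish; the intermediate coefficients of a shifted-and-scaled Chebyshev polynomial can be exponentially large in $L$, and whether the $k=L$ term wins depends on a race between $n^{k/L}$ growth and the unknown sizes of $a_k$. The paper obtains this control explicitly via the Markov Brothers' inequality, giving $|a_k|\le e^{O(L)}$ for all $k$, after which the geometric sum is tamed. Without an analogous bound your ``one checks'' is not a proof.

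There is a second, independent failure: even granting your claimed bound of $O(L^{3/2}e^{-L}n)$, the conclusion $L^{3/2}e^{-L}\le\eps^2$ does not follow from your own choice of $L$. You pin $L$ at the bias-constraint minimum $L\ge\log(2/\eps)/\log\frac{b+1}{b-1}$, which gives $\eps\approx 2\bigl(\tfrac{b-1}{b+1}\bigr)^L$. Then $\eps^2\approx 4\bigl(\tfrac{b-1}{b+1}\bigr)^{2L}$, and $L^{3/2}e^{-L}\le\eps^2$ forces $\bigl(\tfrac{(b+1)^2}{e(b-1)^2}\bigr)^L\le 4L^{-3/2}$, which fails whenever $(b+1)/(b-1) > \sqrt{e}$, i.e.\ whenever $b\lesssim 4.08$. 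Concretely, for $b=2$ you have $\eps\approx 2\cdot 3^{-L}$, so $\eps^2\approx 4\cdot 9^{-L}$, while $e^{-L}$ is exponentially larger; the per-element variance would then exceed $\eps^2 n$ by a factor that grows with $L$. The paper avoids exactly this by drawing $N=Cb^2L n^{1-1/L}$ samples with $C$ a sufficiently large constant (which you do not have, since you work with $\bar N = Ln^{1-1/L}$ on the nose). The fix within your framework is to take $L$ to be a sufficiently large constant multiple of $\log(1/\eps)$ --- not merely the bias-constraint minimum --- so that $e^{-L}\ll\eps^2$; as written, however, your $L$ is too small and your variance bound does not close.
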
 

Note that sample complexity of~\citet{wu2019chebyshev} (which is optimal without access to a predictor) is nearly linear in $n$, while Theorem~\ref{MainUpper} gives a bound that is polynomially smaller than $n$ for every fixed $\epsilon>0$. 
Also, note that $\sqrt{n S} \le n$, so our estimate $\tilde{S}$ is also within $\eps \cdot n$ of the support size $S$.

We also prove a corresponding lower bound, proving that the above theorem is essentially tight.

\begin{theorem} \label{MainLower}
    Suppose we have access to a predictor that returns $\Pi(i)$ such that $\Pi(i) \le p_i \le 2 \cdot \Pi(i)$ for all sampled $i$. Then any algorithm that estimates the support size up to an $\eps n$ additive error with probability at least $9/10$ needs $\Omega(n^{1-\Theta(1/\log(1/\eps))})$ samples.
\end{theorem}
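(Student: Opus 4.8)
The plan is to prove Theorem~\ref{MainLower} by Le~Cam's two‑point method, adapting the moment‑matching construction behind the classical lower bound of \citet{wu2019chebyshev} so that it stays hard even when a factor‑$2$ predictor is available. Concretely, I would exhibit two distributions $\mathcal P_1,\mathcal P_2$ over $[n]$ such that (i) their support sizes differ by more than $2\eps n$; (ii) the distributions of $N$ samples from them are within total variation distance $1/10$ whenever $N\le c\cdot n^{1-\Theta(1/\log(1/\eps))}$; and (iii) there is a \emph{single fixed} predictor $\Pi$ that satisfies $\Pi(i)\le p_i\le 2\Pi(i)$ on every sampled $i$ under both $\mathcal P_1$ and $\mathcal P_2$. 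Given such a pair, an algorithm with oracle access to $\Pi$ (common to both instances) and $N$ samples has, by the data‑processing inequality, total variation at most $1/10$ between its outputs on the two instances, so it cannot be $\eps n$‑accurate with probability $\ge 9/10$ on both; this is exactly the claimed lower bound.

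For the construction, fix $L=\Theta(\log(1/\eps))$. Reserve a fixed ``bulk'' set $B$ of $n/2$ elements carrying identical probabilities in $\mathcal P_1$ and $\mathcal P_2$ (say each equal to $\tfrac{2(1-\delta)}{n}$ for a small constant $\delta$, summing to $1-\delta$); since $B$ is identical in both instances it contributes the same deterministic amount to the support size and to the sample distribution, and is therefore irrelevant to distinguishing. The remaining mass $\delta$ is spread over $K\asymp\delta n$ ``slot'' elements, where slot $i$ independently receives probability $X_i$ drawn from a prior $\mu_1$ (under $\mathcal P_1$) or $\mu_2$ (under $\mathcal P_2$), with each $\mu_j$ a probability measure on $\{0\}\cup[\tfrac1n,\tfrac2n]$. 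I choose the priors so that $\int x^k\,d\mu_1=\int x^k\,d\mu_2$ for $k=1,\dots,L$ — in particular the common first moment $\asymp 1/n$ fixes one value of $K$ for which the total slot mass concentrates around $\delta$ in both cases — while $1-\mu_1(\{0\})$ and $1-\mu_2(\{0\})$ differ by $\Theta(\eps)$. The support of $\mathcal P_j$ is then $n/2+K\bigl(1-\mu_j(\{0\})\bigr)$, so the two supports differ by $K\cdot|\mu_1(\{0\})-\mu_2(\{0\})|=\Theta(\eps n)$, which we scale by constants to exceed $2\eps n$. Finally, set $\Pi(i)=p_i$ on $B$ and $\Pi(i)=1/n$ on every slot index; this is a fixed function of $i$, and on any \emph{sampled} slot element (which necessarily has $p_i\in[\tfrac1n,\tfrac2n]$) it satisfies $\Pi(i)\le p_i\le 2\Pi(i)$, so it is a legal predictor in the sense of the theorem and it conveys nothing beyond ``bulk vs.\ slot.''

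For the indistinguishability bound, I would first Poissonize (replace $N$ fixed samples by $\mathrm{Poisson}(2N)$, which dominates $N$ with high probability), so the data become $K$ independent counts $Y_i\sim\mathrm{Poisson}(NX_i)$ plus the (identical) bulk counts. Hence it suffices to bound the total variation between $\bigotimes_{i\le K}\mathrm{Law}_1(Y_i)$ and $\bigotimes_{i\le K}\mathrm{Law}_2(Y_i)$ by subadditivity over coordinates. For one slot, expanding the Poisson pmf $g_y(x)=e^{-Nx}(Nx)^y/y!$ as a power series in $x$ and using that $\mu_1,\mu_2$ agree on the first $L$ moments of $X$ while $NX\le 2N/n\ll 1$, only the terms of order $>L$ survive, giving
\[ \sum_{y\ge 0}\bigl|\,\mathrm{Law}_1(Y_i)(y)-\mathrm{Law}_2(Y_i)(y)\,\bigr|\ \lesssim\ \frac{(4N/n)^{L+1}}{(L+1)!}. \]
Summing over the $K\asymp\delta n$ slots yields total variation $\lesssim \delta n\cdot (4N/n)^{L+1}/(L+1)!$, which is below $1/10$ precisely when $N\lesssim L\cdot n^{\,1-1/(L+1)}=n^{\,1-\Theta(1/\log(1/\eps))}$. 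Combined with the $\Theta(\eps n)$ support gap, this proves the theorem.

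The main obstacle is the joint requirement on the prior: it must simultaneously match $L=\Theta(\log(1/\eps))$ moments, keep \emph{all} positive atoms inside a factor‑$2$ window (this is the constraint imposed by the strength of the predictor, and it is exactly the feature absent from the classical lower bound), and still move $\mu(\{0\})$ by $\Theta(\eps)$. Quantifying how the attainable gap $|\mu_1(\{0\})-\mu_2(\{0\})|$ degrades when the positive part is squeezed into $[\tfrac1n,\tfrac2n]$ rather than a wide range as in \citet{wu2019chebyshev}, and checking (via the relevant Chebyshev/moment‑system extremal problem on an interval bounded away from $0$) that this degradation costs only a constant factor in the required degree $L$, is the delicate step; the Poissonization, the power‑series estimate above, and the concentration of the total slot mass are all routine.
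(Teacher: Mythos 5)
Your overall plan — two moment-matching priors whose positive atoms are confined to the factor-$2$ window $[\tfrac1n,\tfrac2n]$, so that the predictor carries no useful information, followed by Poissonization and a power-series estimate to bound the per-element total variation — is indeed the right framework and is structurally the same as the paper's proof (the paper formulates it as two explicit ``histogram'' distributions $P,Q$ with matching moments rather than as i.i.d.\ priors over slots, and it outsources the Poissonization and TV step to Corollary~5.7 of Raskhodnikova et al.\ rather than re-deriving it, but these are cosmetic differences).

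The genuine gap is exactly the step you flag and then defer: you assert that a pair of priors on $\{0\}\cup[\tfrac1n,\tfrac2n]$ can simultaneously (a) match $L=\Theta(\log(1/\eps))$ moments and (b) have $|\mu_1(\{0\})-\mu_2(\{0\})|=\Theta(\eps)$ with $\eps=e^{-\Theta(L)}$, and you call verifying this ``checking'' a ``routine'' moment-system extremal problem. But this is the entire content of the lower bound in this setting — without an actual construction (or a quantitative existence argument) the proof does not go through, and it is not obvious a priori how much the restriction to a factor-$2$ window degrades the achievable gap relative to the wide-window Wu–Yang construction. The paper fills this hole with an explicit signed measure: it takes atoms at $\tfrac{k+i}{n}$ for $i=0,\dots,k$ with weights $a_i=\frac{(-1)^i\binom{k}{i}}{2^{k-1}(k+i)}$, splits the positive and negative parts into $P$ and $Q$, and then proves (via the binomial identities $\sum_{i}(-1)^i\binom{k}{i}\binom{i}{s}=0$ for $s<k$, and a separate identity to evaluate $\sum_i a_i$) that (i) both are genuine distributions, (ii) the first $k$ moments match, and (iii) the total weight is $\sum_i a_i=\bigl(k\,2^{k-1}\binom{2k}{k}\bigr)^{-1}=e^{-\Theta(k)}$, which both sets $\eps$ and gives the support-size gap $\eps n$. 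None of this is automatic; the specific choice of weights and the combinatorial verification are the heart of the argument, and your proposal would need to supply an equivalent of it to be complete.

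A smaller point: your Poissonized slot model naturally gives a \emph{random} support size (a Binomial with mean $K(1-\mu_j(\{0\}))$), so to conclude via Le~Cam you would also need to argue that the support size concentrates within $o(\eps n)$ of its mean and that the ``bad'' non-concentration event does not spoil the TV bound — the paper sidesteps this by using a deterministic histogram construction (with a small integrality-rounding correction handled by Claim~5.5 of Raskhodnikova et al.), which is cleaner here.
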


We prove Theorems \ref{MainUpper} and \ref{MainLower} in Appendix \ref{Appendix}. 
We note that while our upper bound proof follows a similar approach to~\citet{wu2019chebyshev}, our lower bound follows a combinatorial approach differing from their linear programming arguments.

\paragraph{The Chebyshev polynomial.}
For completeness, we explicitly write the polynomial coefficients used by Algorithm~\ref{alg:theory}. The standard Chebyshev polynomial of degree $L$ on the interval $[-1,1]$ is given by\footnote{The fact this is indeed a polynomial is proven in standard textbooks, e.g., \citet{timan1963theory}.}
\[Q_L(x)=\mathrm{cos}(L\cdot\mathrm{arccos}(x)) . \] For Algorithm~\ref{alg:theory}, we want a polynomial as follows:
\begin{equation*}\label{eq:poly}
    P_L(x) = \sum_{k = 0}^{L} a_k x^k  \text{ satisfying } P_L(0) = -1 \text{ and } P_L(x) \in [-\eps, \eps] \text{ for all } 1 \le x \le b^2.
\end{equation*}
This is achieved by shifting and scaling $Q_L$, namely,
this polynomial can be written as
\begin{equation}\label{eq:polyexplicit}
    P_L(x) = - \frac{Q_L\left(\frac{2x - (b^2+1)}{(b^2-1)}\right)}{Q_L\left(-\frac{b^2+1}{b^2-1}\right)},
\end{equation}
where $\eps$ equals $\left|Q_L\left(-\frac{b^2+1}{b^2-1}\right)\right|^{-1}$, which decays as $e^{-\Theta(L)}$ if $b$ is a constant. Thus it suffices to choose $L=O(\log(1/\epsilon))$ in Theorem~\ref{MainUpper}. 

\subsection{Assumptions on the Predictor's Accuracy}

In Theorems \ref{MainUpper} and \ref{MainLower}, we assume that the predictor $\Pi$ is always correct within a constant factor for each element $i$. As we will discuss in the experimental section, the predictor that we use does not always satisfy this property. Hence, perhaps other models of the accuracy of $\Pi$ are better suited, such as a promised upper bound on $TV(\Pi, \mathcal{P}),$ i.e., the total variation distance between $\Pi$ and $\mathcal{P}.$ However, it turns out that if we are promised that $TV(\Pi, \mathcal{P}) \le \eps,$ then the algorithm of \cite{canonne2014testing} can in fact approximate the support size of $\mathcal{P}$ up to error $O(\eps) \cdot n$ using only $O(\eps^{-2})$ samples. Conversely, if we are only promised that $TV(\Pi, \mathcal{P}) \le \gamma$ for some $\gamma \gg \eps,$ the algorithm of \cite{wu2019chebyshev}, which requires $\Theta(n/\log n)$ samples to approximate the support size of $\mathcal{P}$, is optimal. We formally state and prove both of these results in Appendix \ref{sec:tv_bounds}.


Our experimental results demonstrate that our algorithm can perform better than both the algorithms of \cite{canonne2014testing} and \cite{wu2019chebyshev}. This suggests that our assumption (a pointwise approximation guarantee, but with an arbitrary constant multiplicative approximation factor), even if not fully accurate, is better suited for our application than a total variation distance bound.

\section{Experiments}

In this section we evaluate our algorithm empirically on real and synthetic data.

\paragraph{Datasets.}
We use two real and one synthetic datasets:
\begin{itemize}
    \item\textbf{AOL}: 21 million search queries from 650 thousand users over 90 days. The queries are keywords for the AOL search engine. Each day is treated as a separate input distribution. The goal is to estimate the number of distinct keywords.
    \item\textbf{IP}: Packets collected at a backbone link of a Tier1 ISP between Chicago and Seattle in 2016 over 60 minutes.\footnote{From CAIDA internet traces 2016, \url{https://www.caida.org/data/monitors/passive-equinix-chicago.xml}.} Each packet is annotated with the sender IP address, and the goal is to estimate the number of distinct addresses. Each minute is treated as a separate distribution.
    \item\textbf{Zipfian}: Synthetic dataset of samples drawn from a finite Zipfian distribution over $\{1,\ldots,10^5\}$ with the probability of each element $i$ proportional to $i^{-0.5}$.
\end{itemize}

The AOL and IP datasets were used in~\cite{hsu2019learning}, who trained a recurrent neural network (RNN) to predict the frequency of a given element for each of those datasets. 
We use their trained RNNs as predictors. For the Zipfian dataset we use the empirical counts of an independent sample as predictions. A more detailed account of each predictor is given later in this section. The properties of the datasets are summarized in Table \ref{table:data}.

\paragraph{Baselines.}
We compare Algorithm~\ref{alg:theory} with two existing baselines:
\begin{itemize}
    \item The algorithm of~\citet{wu2019chebyshev}, which is the state of the art for algorithms without predictor access. We abbreviate its name as WY.\footnote{Apart from their tight analysis, \citet{wu2019chebyshev} also report experimental results showing their algorithm is empirically superior to previous baselines.}
    \item The algorithm of~\citet{canonne2014testing}, which is the state of the art for algorithms with access to a~\emph{perfect} predictor. We abbreviate its name as CR.
\end{itemize}
\begin{table}
\centering
\caption{Datasets used in our experiments. The listed values of $n$ (total size) and support size (distinct elements) for AOL/IP are per day/minute (respectively), approximated across all days/minutes.}
\label{table:data}
\begin{tabular}{cccccc}
\toprule
\textbf{Name} & \textbf{Type} & \textbf{$\#$ Distributions}  & \textbf{Predictor}      & \textbf{$n$}        & \textbf{Support size} \\ \midrule
AOL     & Keywords         & 90 (days)   & RNN     & $\sim 4 \cdot 10^5$ & $\sim 2 \cdot 10^5$   \\ 
IP        & IP addresses            & 60 (minutes) & RNN &  $\sim 3 \cdot 10^7$      & $ \sim 10^6$          \\ 
Zipfian     & Synthetic       & 1     & Empirical &                         $\sim 2 \cdot 10^5$      & $10^5$                \\
\bottomrule
\end{tabular}
\end{table}

\paragraph{Error measurement.} We measure accuracy in terms of the relative error $ |1-\tilde{S}/S|$, where $S$ is the true support size and $\tilde{S}$ is the estimate returned by the algorithm. We report median errors over $50$ independent executions of each experiment, $\pm$ one standard deviation. 

\paragraph{Summary of results.}
Our experiments show that on one hand, our algorithm can indeed leverage the predictor to get significantly improved accuracy compared to WY. On the other hand, our algorithm is robust to 
different predictors: while the CR algorithm performs extremely well on one dataset (AOL), it performs poorly on the other two (IP and Zipfian), whereas our algorithm is able to leverage the predictors in those cases too and obtain significant improvement over both baselines.


\subsection{Base Parameter Selection}
Algorithm~\ref{alg:theory} uses two parameters that need to be set: The polynomial degree $L$, and the base parameter $b$. 
The performance of the algorithm is not very sensitive to $L$, and for simplicity we use the same setting as~\citet{wu2019chebyshev}, $L=\lfloor 0.45\log n \rfloor$.
The setting of $b$ requires more care. 

Recall that $b$ is a constant used as the ratio between the maximum and minimum endpoint of each interval $I_j$ in Algorithm~\ref{alg:theory}. 
There are two reasons why $b$ cannot be chosen too small. One is that the algorithm is designed to accommodate a predictor that provides a $b$-approximation of the true probabilities, so larger $b$ makes the algorithm more robust to imperfect predictors. The other reason is that small $b$ means using many small intervals, and thus a smaller number of samples assigned to each interval. This leads to higher noise in the Chebyshev polynomial estimators invoked within each interval (even if the assignment of elements to intervals is correct), and empirically impedes performance. On the other hand, if we set $b$ to be too large (resulting in one large interval the covers almost the whole range of $p_i$'s), we are essentially not using information from the predictor, and thus do not expect to improve over WY.

To resolve this issue, we introduce a sanity check in each interval $I_j$, whose goal is to rule out bases that are too small. The sanity check passes if $\tilde S_j\in[0,1/l_j]$, where $l_j$ is the left endpoint of $I_j$ (i.e., $I_j=[l_j,r_j]$), and $\tilde S_j$ is as defined in Algorithm~\ref{alg:theory}. The reasoning is as follows. On one hand, the Chebyshev polynomial estimator which we use to compute $\tilde S_j$ can in fact return negative numbers, leading to failure modes with $\tilde S_j<0$. On the other hand, since all element probabilities in $I_j$ are lower bounded by $l_j$, it can contain at most $1/l_j$ elements. Therefore, any estimate $\tilde S_j$ of the number of elements in $I_j$ which is outside $[0,1/l_j]$ is obviously incorrect.

In our implementation, we start by running Algorithm~\ref{alg:theory} with $b=2$. If any interval fails the sanity check, we increment $b$ and repeat the algorithm (with the same set of samples). The final base we use is twice the minimal one such that all intervals pass the sanity check, where the final doubling is to ensure we are indeed past the point where all checks succeed.

The effect of this base selection procedure on each of our datasets is depicted in Figures~\ref{fig:aol_base}, \ref{fig:ip_base}, and \ref{fig:zipf_base}.\footnote{In those plots, for visualization, in order to compute the error whenever a sanity check fails in $I_j$, we replace $\tilde S_j$ with a na\"ive estimate, defined as the number of distinct sample elements assigned to $I_j$. Note that our implementation never actually uses those na\"ive estimates, since it only uses bases that pass all sanity checks.} 
For a fixed sample size, we plot the performance of our algorithm (dotted blue) as the base increases compared to WY (solid orange, independent of base), as well as the fraction of intervals that failed the sanity check (dashed green).
The plots show that for very small bases, the sanity check fails on some intervals, and the error is large. When the base is sufficiently large, all intervals pass the sanity check, and we see a sudden plunge in the error. Then, as the base continues to grow, our algorithm continues to perform well, but gradually degrades and converges to WY due to having a single dominating interval. This affirms the reasoning above and justifies our base selection procedure.

\subsection{Results}
\paragraph{AOL data.}
As the predictor, we use the RNN trained by~\cite{hsu2019learning} to predict the frequency of a given keyword. Their predictor is trained on the first $5$ days and the $6$th day is used for validation. The results for day $\#10$ are shown in Figure~\ref{fig:aol_hist}. (The results across all days are similar; see more below.) They show that our algorithm performs significantly better than WY. Nonetheless, CR (which relies on access to a perfect predictor) achieves better performance on a much smaller sample size. This is apparently due to highly specific traits of the predictor; as we will see presently,  the performance of CR is considerably degraded on the other datasets.

\begin{figure}
\centering
\begin{minipage}{.5\textwidth}
  \centering
  \includegraphics[width=1.11\linewidth]{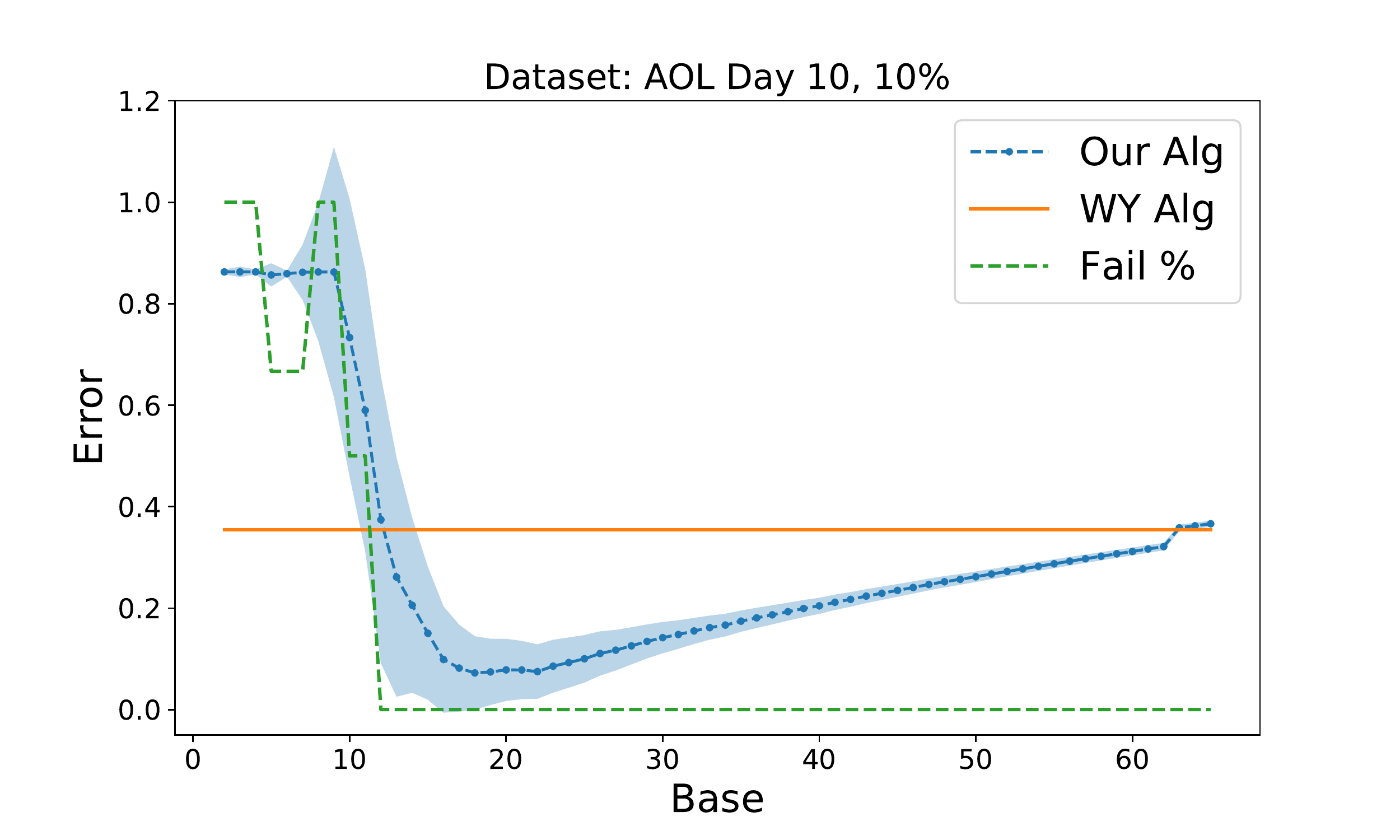}
  \captionof{figure}{Error by base, AOL, sample size $10\%\cdot n$}
  \label{fig:aol_base}
\end{minipage}%
\begin{minipage}{.5\textwidth}
  \centering
  \includegraphics[width=1.11\linewidth]{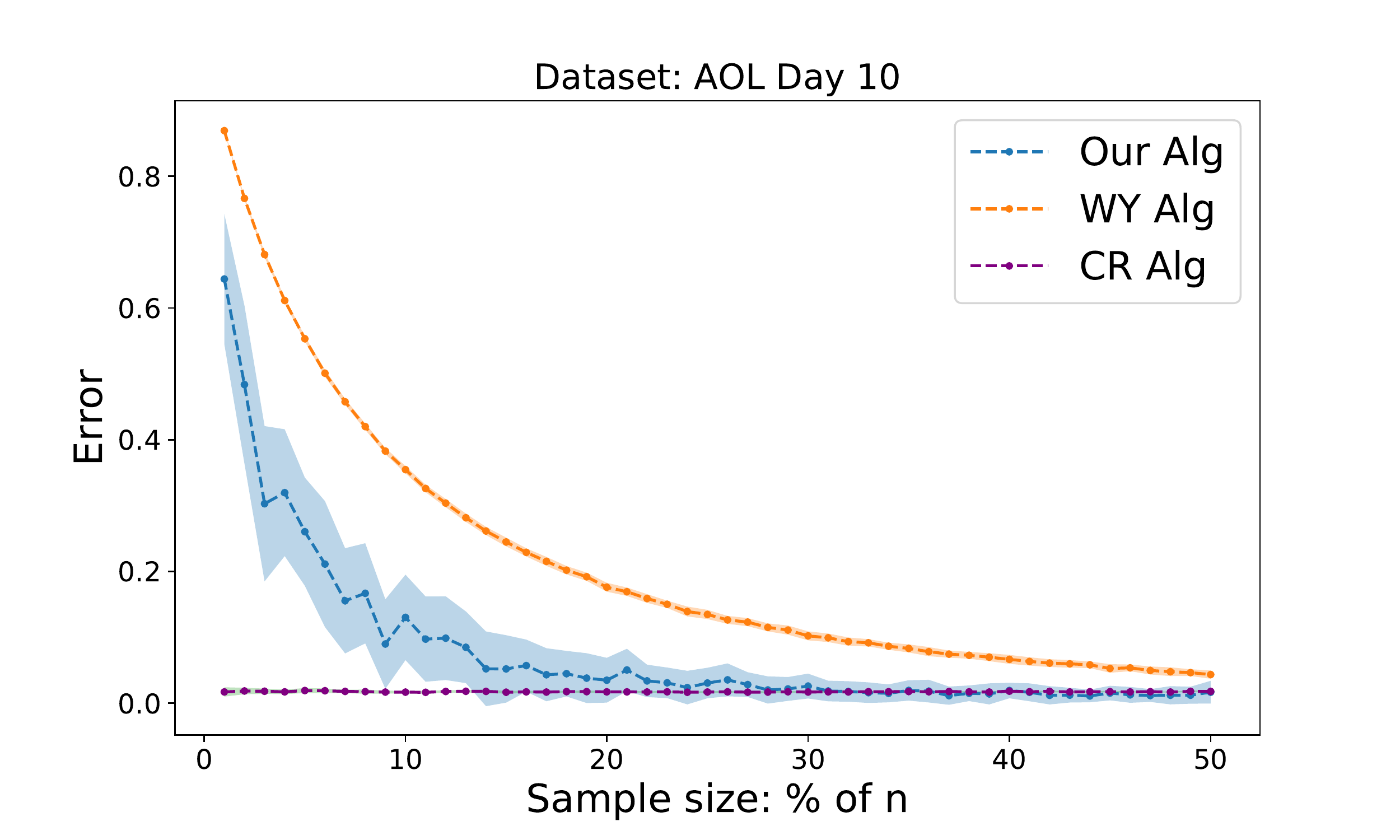}
  \captionof{figure}{Error per sample size, AOL}
  \label{fig:aol_hist}
\end{minipage}
\begin{minipage}{.5\textwidth}
  \centering
  \includegraphics[width=1.11\linewidth]{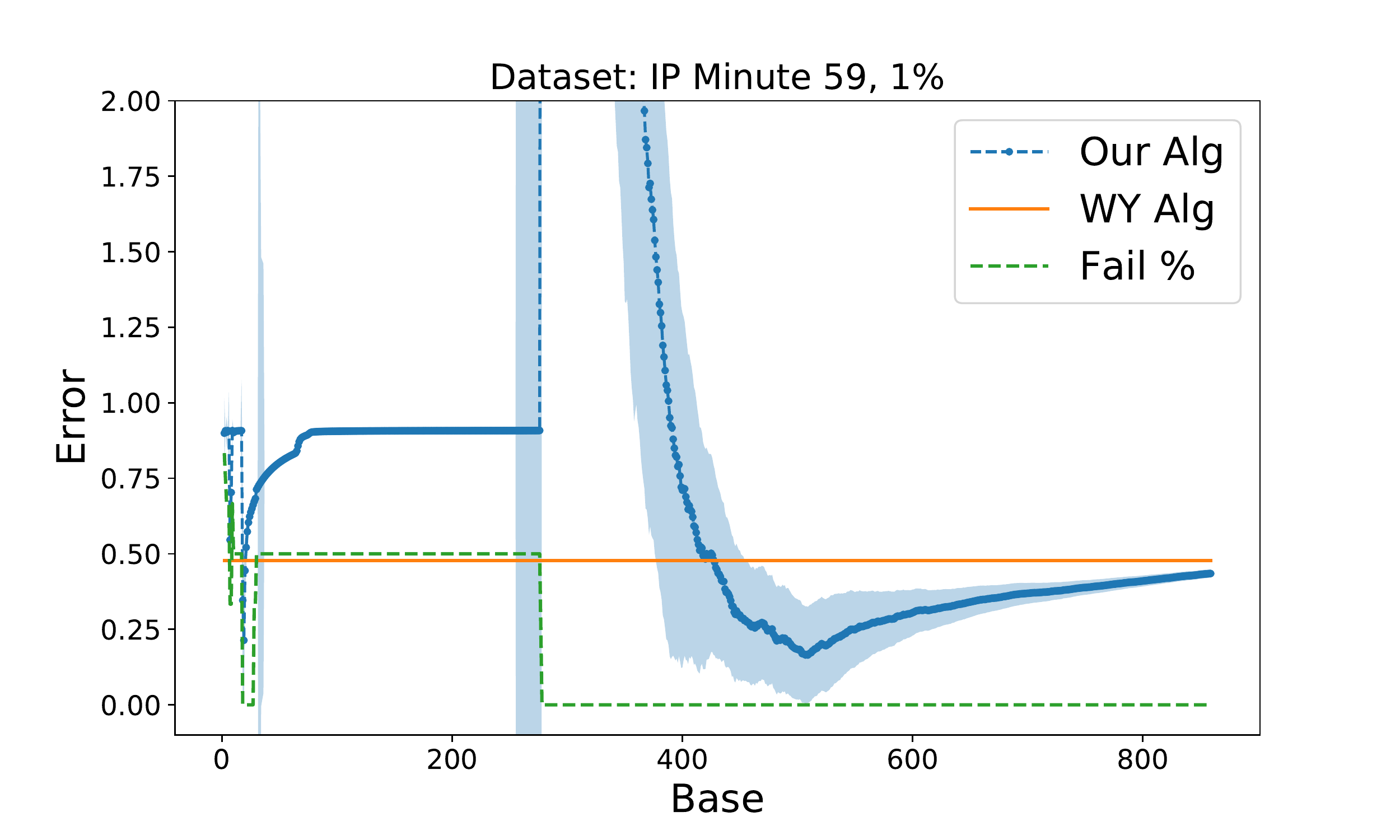}
  \captionof{figure}{Error by base, IP, sample size $1\%\cdot n$}
  \label{fig:ip_base}
\end{minipage}%
\begin{minipage}{.5\textwidth}
  \centering
  \includegraphics[width=1.11\linewidth]{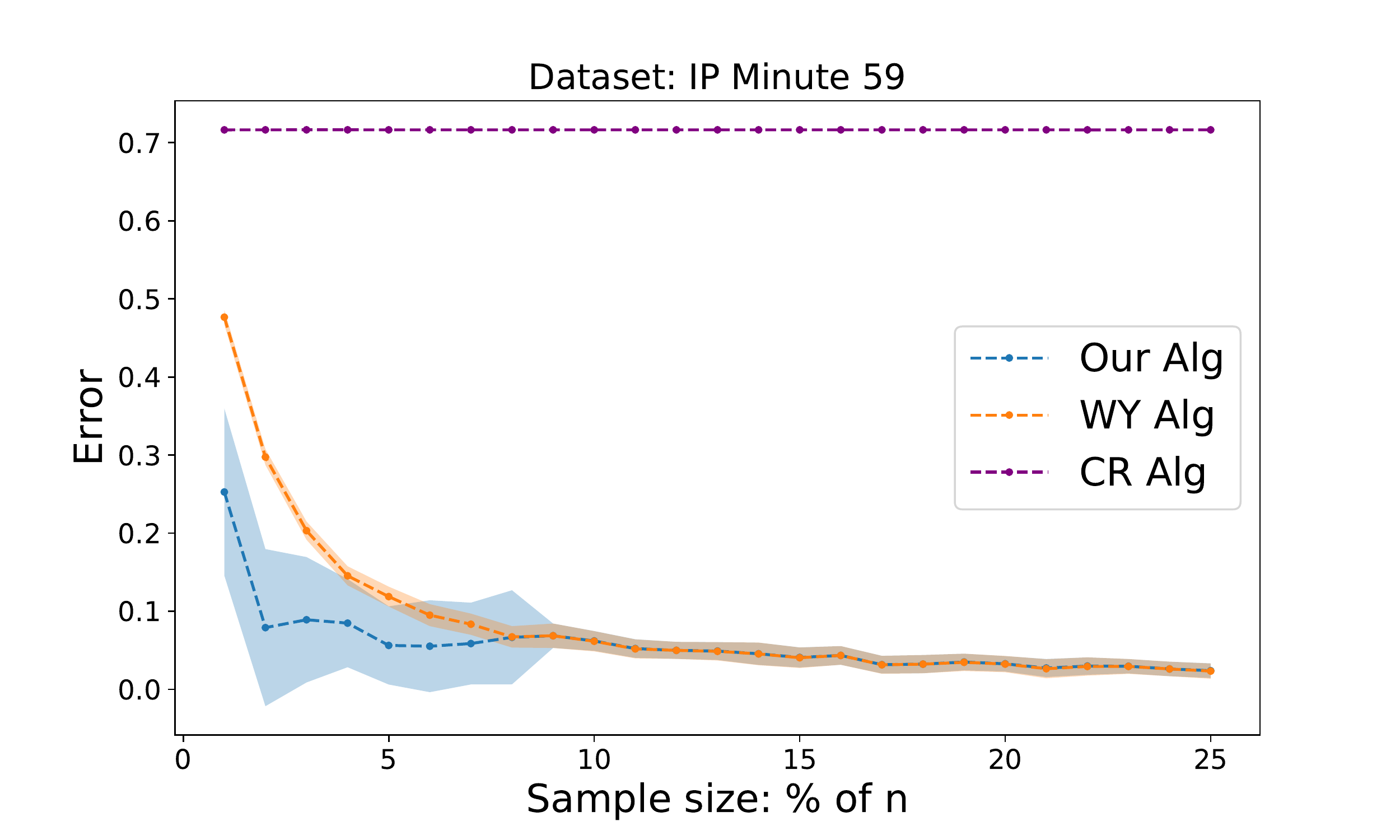}
  \captionof{figure}{Error per sample size, IP}
  \label{fig:ip_hist}
\end{minipage} 
\begin{minipage}{.5\textwidth}
  \centering
  \includegraphics[width=1.11\linewidth]{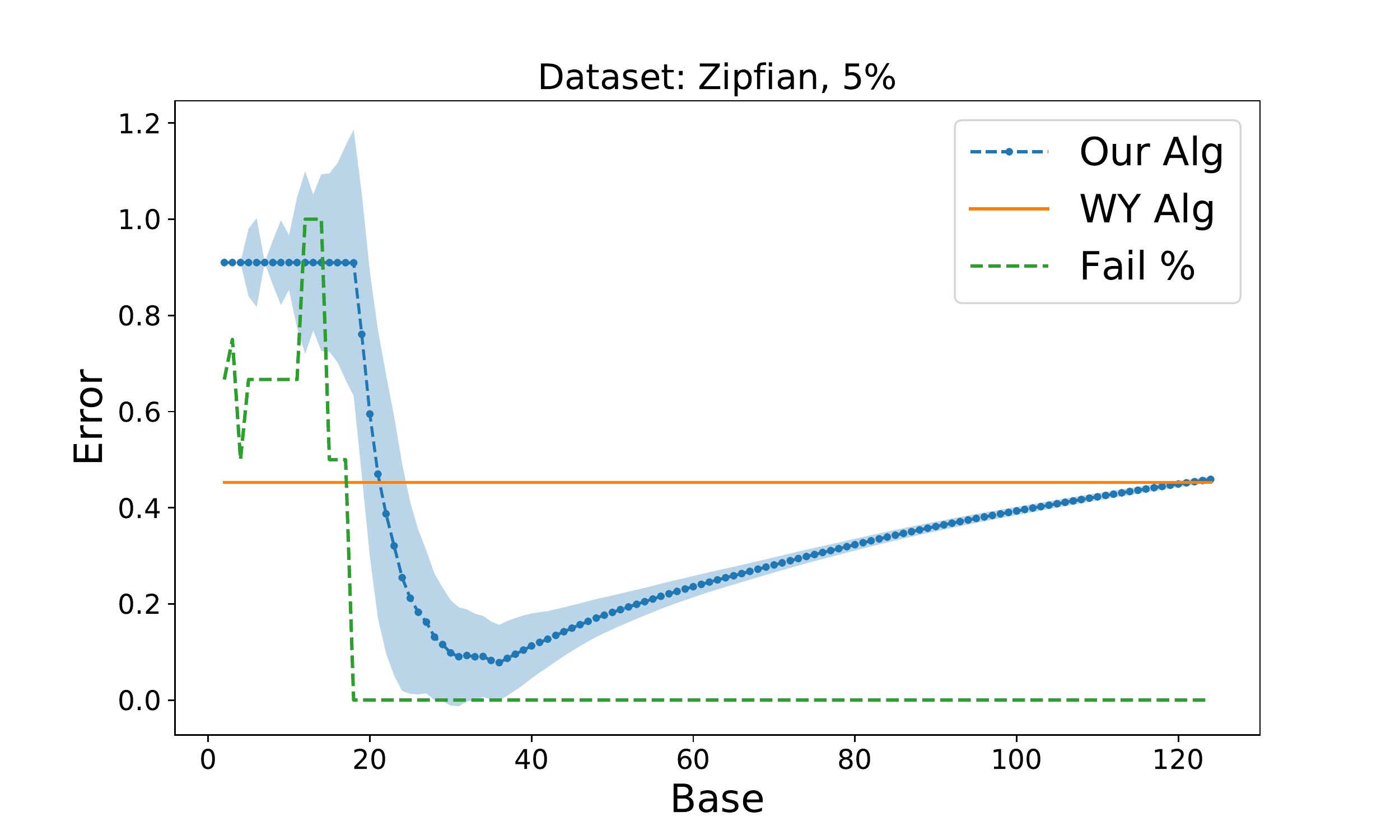}
  \captionof{figure}{Error by base, Zipfian, sample size $5\% n$}
  \label{fig:zipf_base}
\end{minipage}%
\begin{minipage}{.5\textwidth}
  \centering
  \includegraphics[width=1.11\linewidth]{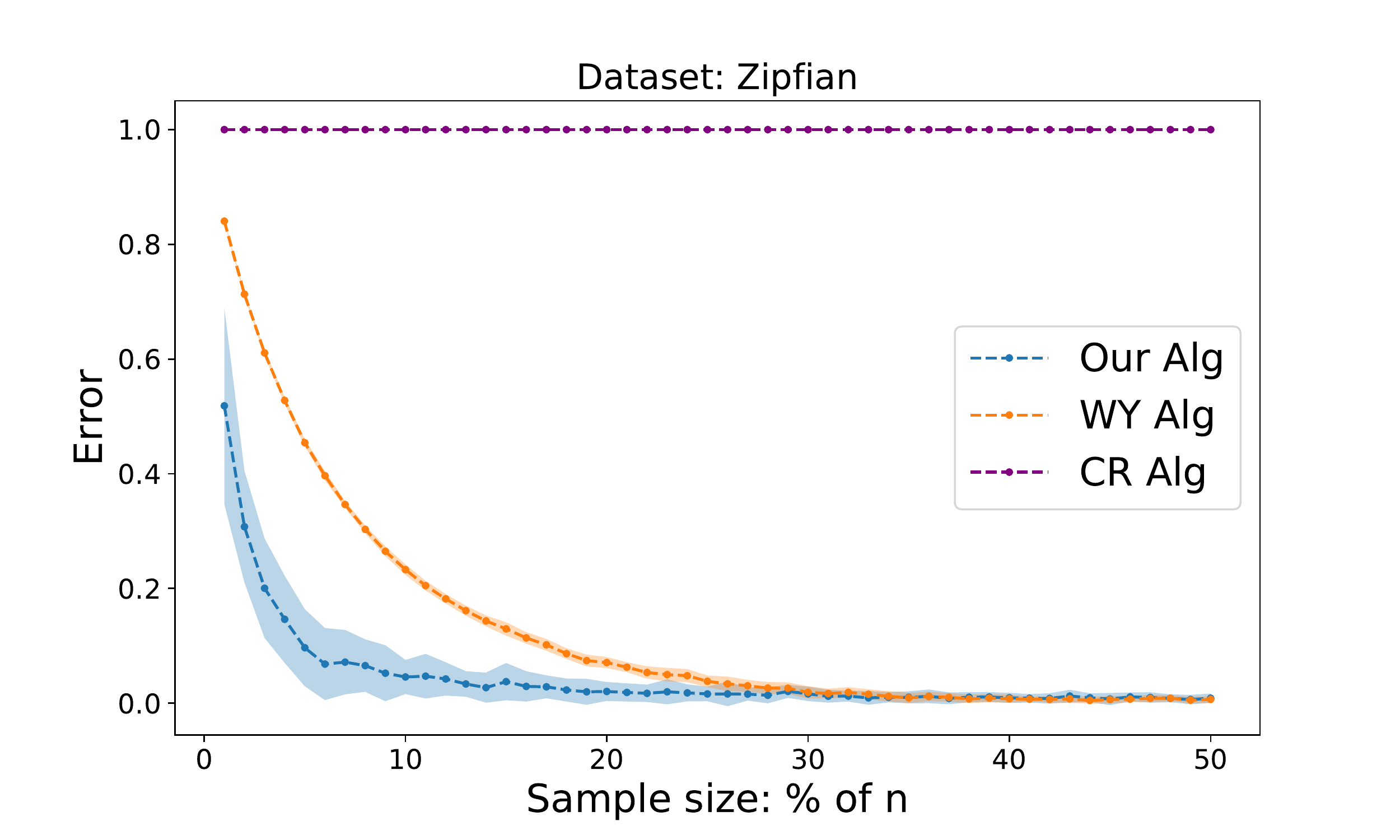}
  \captionof{figure}{Error per sample size, Zipfian}
  \label{fig:zipf_history}
\end{minipage}
\end{figure}

\paragraph{IP data.}
Here too we use the trained RNN of~\cite{hsu2019learning}. It is trained on the first $7$ minutes and the $8$th minute is used for validation. However, unlike AOL, \cite{hsu2019learning} trained the RNN to predict the~\emph{log} of the frequency of the given IP address, rather than the frequency itself, due to training stability considerations. To use it as a predictor, we exponentiate the RNN output. This inevitably leads to less accurate predictions.

The results for minute $59$ are shown in.  Figure~\ref{fig:ip_hist}. (As in the AOL data, the results across all minutes are similar; see more below). Again we see a significant advantage to our algorithm over WY for small sample sizes. Here, unlike the AOL dataset, CR does not produce good results.

\paragraph{Zipfian data.}
To form a predictor for this synthetic distribution, we drew a random sample of size $10\%$ of $n$, and used the empirical count of each element in this fixed sample as the prediction for its frequency. If the predictor is queried for an element that did not appear in the sample, its predicted probability is reported as the minimum $1/n$. We use this fixed predictor in all repetitions of the experiment (which were run on fresh independent samples). 
The results are reported in Figure \ref{fig:zipf_history}. 
As with the IP data, our algorithm significantly improves over WY for small sample sizes, and both algorithms outperform CR by a large margin.

\paragraph{AOL and IP results over time.}
Finally, we present accuracy results over the days/minutes of the AOL/IP datasets (respectively). The purpose is to demonstrate that the performance of our algorithm remains consistent over time, even when the data has moved away from the initial training period and the predictor may become `stale'. The results for AOL are shown in Figures~\ref{fig:aol_error_5}, \ref{fig:aol_error_10}, yielding a median~\textbf{2.2-fold} and~\textbf{3.0-fold} improvement over WY (for sample sizes $5\%\cdot n$ and $10\%\cdot n$, respectively). 
The results for IP are shown in Figures~\ref{fig:ip_error_1} and \ref{fig:ip_error_25}, yielding a median~\textbf{1.7-fold} and~\textbf{3.0-fold} improvement over WY (for sample sizes $1\%\cdot n$ and $2.5\%\cdot n$, respectively). 
As before, CR performs better than either algorithm on AOL, but fails by a large margin on IP. 

\begin{figure}
\centering
\begin{subfigure}{.5\textwidth}
  \centering
  \includegraphics[width=1.11\linewidth]{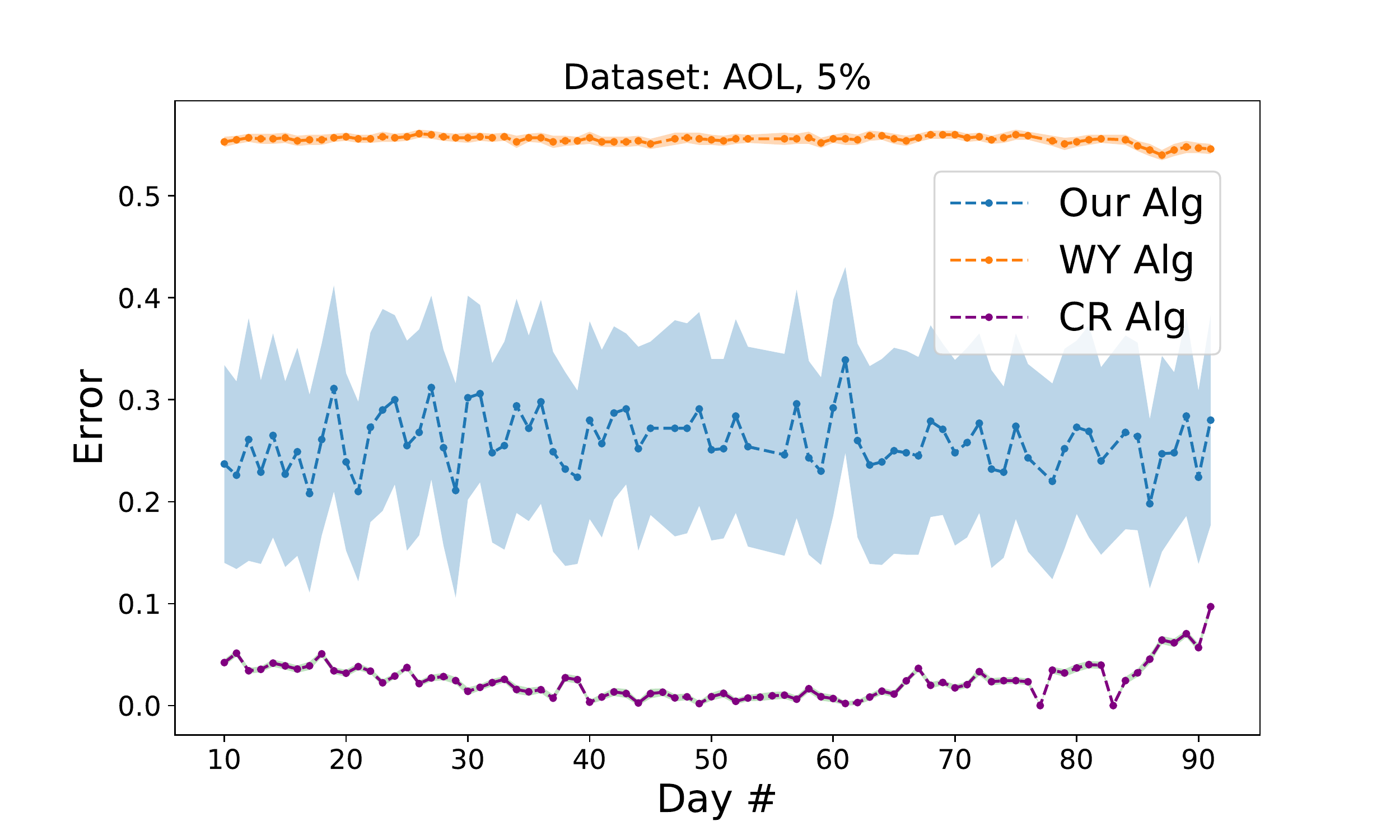}
  \caption{Sample size: $5\% \cdot n$}
  \label{fig:aol_error_5}
\end{subfigure}%
\begin{subfigure}{.5\textwidth}
  \centering
  \includegraphics[width=1.11\linewidth]{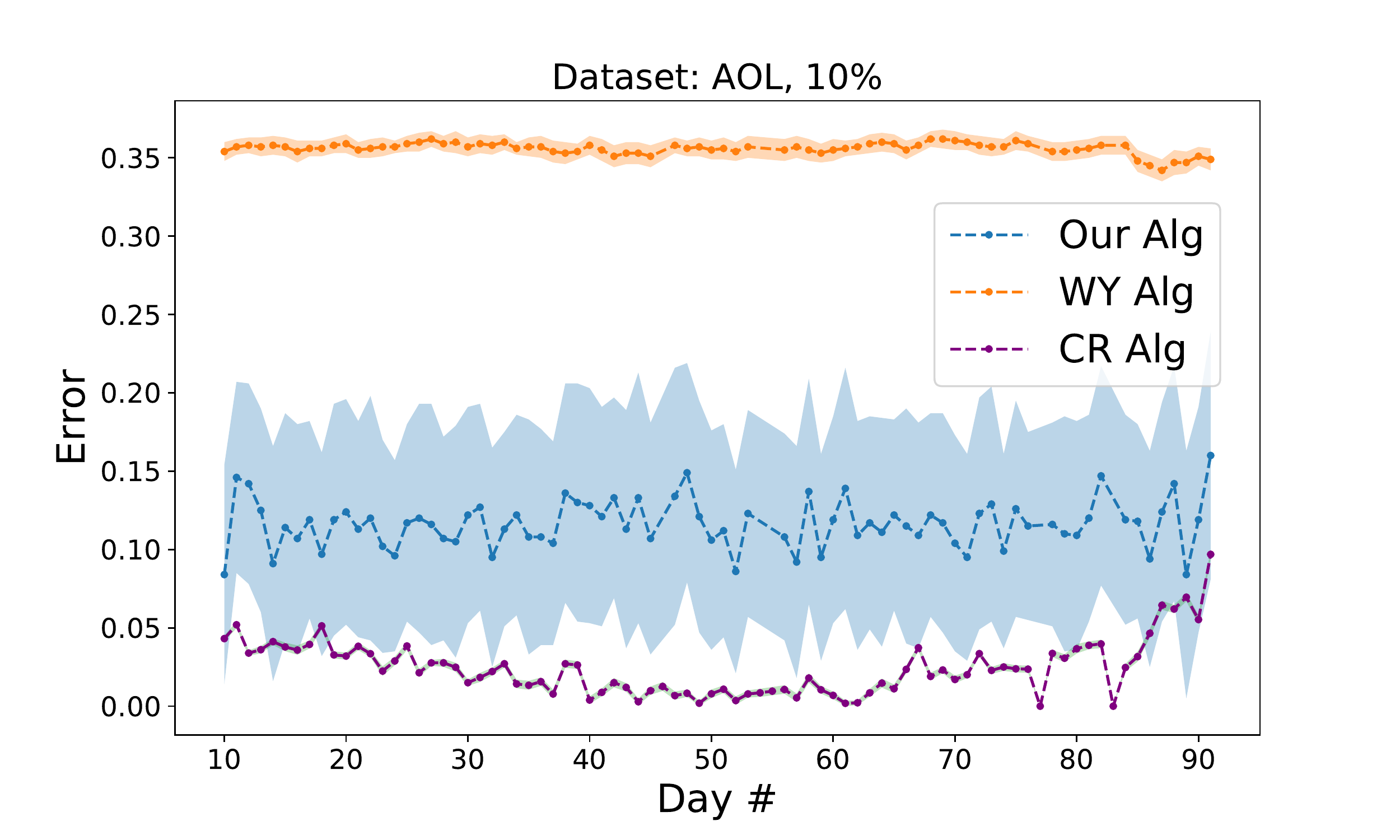}
  \caption{Sample size: $10\% \cdot n$}
  \label{fig:aol_error_10}
\end{subfigure}
\caption{Error across the different AOL days}
\label{fig:aol_error}
\begin{subfigure}{.5\textwidth}
  \centering
  \includegraphics[width=1.11\linewidth]{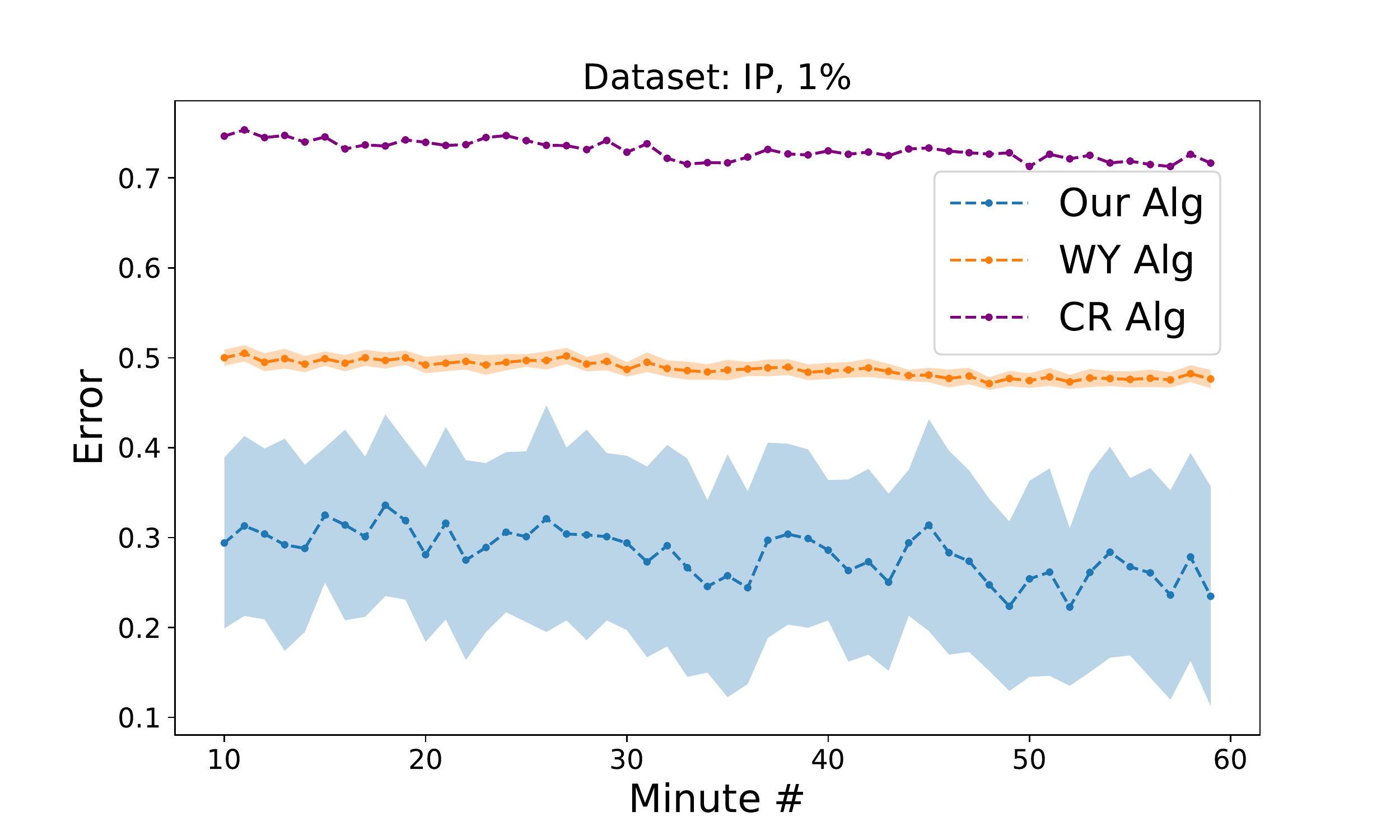}
  \caption{Sample size: $1\% \cdot n$}
  \label{fig:ip_error_1}
\end{subfigure}%
\begin{subfigure}{.5\textwidth}
  \centering
  \includegraphics[width=1.11\linewidth]{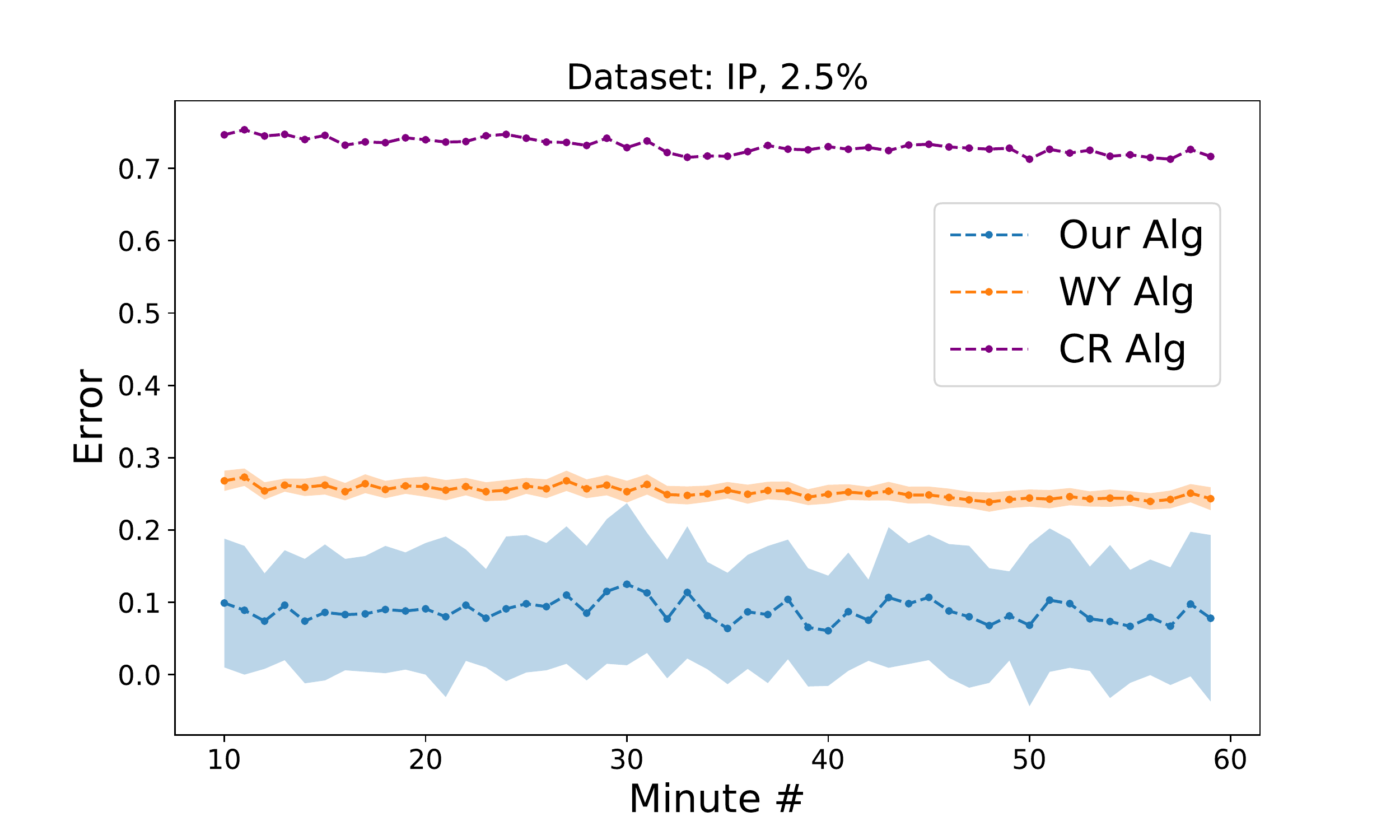}
  \caption{Sample size: $2.5\% \cdot n$}
  \label{fig:ip_error_25}
\end{subfigure}
\caption{Error across the different IP minutes}
\label{fig:ip_error}
\end{figure}

\subsubsection*{Acknowledgments}
This research was supported in part by the NSF TRIPODS program (awards CCF-1740751  and DMS-2022448); NSF awards CCF-1535851, CCF-2006664 and IIS-1741137;  Fintech@CSAIL;
Simons Investigator Award; MIT-IBM Watson collaboration; Eric and Wendy Schmidt Fund for Strategic Innovation, Ben-Gurion University of the Negev; MIT Akamai Fellowship; and NSF Graduate Research Fellowship Program.

The authors would like to thank Justin Chen for helpful comments on a draft of this paper, as well as the anonymous reviewers.

\newpage
\appendix
\section{Omitted Proofs} \label{Appendix}

    \subsection{Analysis of the Upper Bound}
    
    In this subsection, we prove Theorem \ref{MainUpper}, which provides the upper bound for the sample complexity of Algorithm \ref{alg:theory}.
    
    \begin{proof}[Proof of Theorem \ref{MainUpper}]
    Consider some $i \in [n]$. The predictor $\Pi$ gives us an estimate $\Pi(i)\in[p_i, b\cdot p_i]$ of the unknown true probability $p_i$. Let $j_i$ be the integer for which $\Pi(i)\in[\frac{b^{j_i}}{n},\frac{b^{j_i+1}}{n}]$. 
    We treat $i$ as assigned by the predictor to the interval $I_{j_i}$. Observe that as a consequence, we have
    \begin{equation}\label{eq:pirange}
        p_i \in \left[\frac{b^{j_i}}{n}, \frac{b^{j_i+2}}{n}\right] .
    \end{equation}
    
    Suppose $i$ is observed $k$ times in the sample.
    Define,
    \[
          \tilde{S}(i) =
        \begin{cases}
          1 + a_{k} \left(\frac{n}{b^j}\right)^{k} \cdot \frac{k!}{N^{k}} & \text{if $\frac{b^{j_i}}{n} \le \frac{0.5 \log n}{N}$} \\
          1        & \text{if $\frac{b^{j_i}}{n} > \frac{0.5 \log n}{N}$ and $k\geq1$} \\
          0 & \text{if $\frac{b^{j_i}}{n} > \frac{0.5 \log n}{N}$ and $k=0$}
        \end{cases}
    \]
    The $a_k$'s are the Chebyshev polynomial coefficients as per Algorithm~\ref{alg:theory}.
    Note that as we drew a total of $Pois(N)$ samples, the number of times each element $i$ appears in the sample is distributed as $N_i = Pois(N \cdot p_i)$ times, and the values $N_i$ are independent across different $i$'s (this is the standard Poissonization trick). 
    Thus the $\tilde S(i)$'s are also independent.
    Moreover, note that the value $\tilde{S}_j$ defined in Algorithm~\ref{alg:theory} satisfies $\tilde{S}_j = \sum_{i: j_i = j} \tilde{S}(i)$. 
    Finally, by eq.~(\ref{eq:pirange}) we have $p_i \cdot \frac{n}{b^{j_i}} \in [1, b^2]$. Thus, if $\frac{b^{j_i}}{n} \le \frac{0.5 \log n}{N},$ we compute the expectation of $\tilde{S}(i)$ as (letting $j=j_i$ to ease notation),
    \begin{align*}
       \E[\tilde{S}(i)] &= 1 + \sum_{k=0}^L \BP(N_i = k) \cdot a_k \left(\frac{n}{b^j}\right)^k \cdot \frac{k!}{N^k} \\
       &= 1 + \sum_{k = 0}^{L} \frac{e^{-N p_i} (N p_i)^k}{k!} \cdot a_k \left(\frac{n}{b^j}\right)^k \cdot \frac{k!}{N^k} \\
       &= 1 + e^{-N p_i} \cdot \sum_{k = 0}^{L} a_k \left(p_i \cdot \frac{n}{b^j}\right)^k \\
       &= 1 + e^{-N p_i} \cdot P_L\left(p_i \cdot \frac{n}{b^j}\right) \\
       &\in [1-\eps, 1+\eps],
    \end{align*}
        since $p_i \cdot \frac{n}{b^j} \in [1, b^2]$ and since $e^{-N p_i} \le 1.$ However, if $\frac{b^{j_i}}{n} > \frac{0.5 \log n}{N},$ then $\BE[\tilde{S}(i)]$ equals the probability that $i$ is sampled, which is $1 - (1-p_i)^N$. But since $p_i > \frac{b^{j_i}}{n} > \frac{0.5 \log n}{N},$ we have that $1 \ge \BE[\tilde{S}(i)] > 1 - \left(1 - \frac{0.5 \log n}{N}\right)^N > 1 - e^{(0.5 \log n/N) \cdot N} = 1-n^{-1/2}.$ We note that if $i$ is never seen (which is possible even if $p_i \neq 0$), we do not know $j_i$. However, in this case, $\tilde{S}(i) = 0$ regardless of the value of $j_i$. Therefore, we get an estimator $\tilde{S}(i)$ such that $\tilde{S}(i) = 0$ if $p_i = 0$ and $\E[\tilde{S}(i)] \in [1-\eps, 1+\eps]$ otherwise, assuming $\eps > n^{-1/2}$.
    
    Next we analyze the variance of $\tilde{S}(i).$ For $i$ with $p_i = 0$, $\tilde{S}(i) = 0$ always, so $\Var(\tilde{S}(i)) = 0.$ Otherwise, if $\frac{b^{j_i}}{n} > \frac{0.5 \log n}{N},$ then $\tilde{S}(i)$ is always between $0$ and $1$, so $\Var(\tilde{S}(i)) \le 1.$ Finally, if $\frac{b^{j_i}}{n} \le \frac{0.5 \log n}{N},$ then $\Var(\tilde{S}(i)) \le \E[(\tilde{S}(i)-1)^2]$ and we can write (again with $j=j_i$)
    \begin{align}
        \nonumber
        \E[(\tilde{S}(i)-1)^2] &= \sum_{k = 0}^{L} \BP(N_i = k) \cdot \left(a_k \left(p_i \cdot \frac{n}{b^j}\right)^k\right)^2 \\ \nonumber
        &= \sum_{k = 0}^{L} \frac{e^{-N p_i} (N p_i)^k}{k!} \cdot \left(a_k \left(\frac{n}{b^j}\right)^k \cdot \frac{k!}{N^k}\right)^2 \\ \nonumber
        &\le \left(\max_{0 \le k \le L} a_k^2\right) \cdot \sum_{k = 0}^{L} \left(p_i \cdot \frac{n}{b^j}\right)^k \cdot \left(\frac{n}{b^j}\right)^k \cdot \frac{k!}{N^k} \\ 
        &\le \left(\max_{0 \le k \le L} a_k^2\right) \cdot \sum_{k = 0}^{L} \left(\frac{b^2 \cdot k \cdot n}{N}\right)^k, \label{VarBound}
    \end{align}
        where for the last inequality we noted that $k! \le k^k$, $p_i \cdot \frac{n}{b^j} \le b^2,$ and $b^j \ge 1.$
        
        Now, it is a well known consequence of the Markov Brothers' inequality that all the coefficients of the standard degree $L$ Chebyshev polynomial $Q_L(x)$ are bounded by $e^{O(L)}$ \citep{MarkovBrothers}. Since $P_L = \sum_{k = 0}^{L} a_k x^k$ is just $\eps \cdot Q\left(\frac{b^2+1}{2} + x \cdot \frac{b^2-1}{2}\right),$ we have that for any fixed $b = O(1)$, the coefficients $a_k$ are all bounded by $e^{O(L)}$ as well. Therefore, for $N = C \cdot b^2 \cdot L \cdot n^{1-1/L}$ for some constant $C$, we have that $b^2 \cdot k \cdot n/N \le n^{1/L}/C$, so we can bound Equation \eqref{VarBound} by
    \[e^{O(L)} \cdot \sum_{k = 0}^{L} \left(\frac{n^{1/L}}{C}\right)^k \le \frac{n}{(C')^L}\]
        for some other constant $C'$.
    
    In summary, if $p_i \neq 0,$ we have that $\E[\tilde{S}(i)] \in [1-\eps, 1+\eps]$ and $\Var(\tilde{S}(i)) \le \eps^2 n$ if we choose $C$ to be a sufficiently large constant, since $\eps = e^{-\Theta(L)}$ for a constant $1 < b \le O(1)$. This is true even for $\frac{b^{j_i}}{n} > \frac{0.5 \log n}{N}$ since $\eps^2 n > 1.$ However, we know that $\tilde{S} = \sum_{j = 0}^{\log_b n} \tilde{S}_j = \sum_{i = 1}^{n} \tilde{S}(i),$ since by the definition of $\tilde{S}_j,$ $\tilde{S}_j = \sum_{i: j_i = j} \tilde{S}(i).$ Therefore, since the estimators $\tilde{S}(i)$ are independent, we have that $\E[\tilde{S}] = \sum_{i = 1}^{n} \E[\tilde{S}(i)] \in (1-\eps, 1+\eps) \cdot S$ and $\Var(\tilde{S}) = \sum_{i: p_i \neq 0} \Var(\tilde{S}(i)) \le \eps^2 \cdot n \cdot S,$ where we use the independence of the $\tilde{S}(i)$'s. Therefore, since $S \le n$, with probability at least $0.9,$ $|\tilde{S}-S| = O(\eps) \cdot \sqrt{n \cdot S}$ by Chebyshev's inequality.
    \end{proof}
    
    \begin{remark}
        We note that our analysis actually works (and becomes somewhat simpler) even if the algorithm is modified to define $\tilde S_{j} = \sum_{i \in [n]: \Pi(i) \in I_j} \left(1 + a_{N_i} \left(\frac{n}{b^j}\right)^{N_i} \cdot \frac{N_i!}{N^{N_i}}\right)$ for all intervals, as opposed to $\tilde{S}_j = \#\{i \in [n]: N_i \ge 1, \Pi(i) \in I_j\}$ for the intervals $I_j$ with $\frac{b^j}{n} > \frac{0.5 \log n}{N}$. However, because we can decrease the bias as well as the variance for these larger intervals, we modify the algorithm accordingly. While this does not affect the theoretical guarantees, it demonstrated an improvement in practice. This threshold was also used in~\citet{wu2019chebyshev} (the choice of leading constant $0.5$ in the threshold term $\frac{0.5\log n}{N}$ is arbitrary, and for simplicity we have adopted the value they used).
    \end{remark}

    \subsection{Analysis of the Lower Bound}\label{sec:lb}
    
    In this subsection, we prove Theorem \ref{MainLower}, which proves that the sample complexity of Algorithm \ref{alg:theory} is essentially tight.
    
    \begin{proof}[Proof of Theorem \ref{MainLower}]
    
    In order to prove the lower bound, we shall define two distributions $P$ and $Q$ for which the first $k$ moments are matching, but their support size differs on at least $\eps n$ elements.
    Furthermore, both distributions will be supported on $\{0\}\cup \left[\frac{k}{n}, \frac{k+1}{n}, \ldots, \frac{2k}{n}\right]$, so that a $2$-factor approximation predictor does not provide any useful information to an algorithm trying to distinguish the two distributions.
    
    We start with an overview of the definition of the two distributions and then explain how to formalize the arguments, following the Poissonnization and rounding techniques detailed in ~\cite{raskhodnikova2009strong}.

    Let $\eps = \left(k \cdot 2^{k-1} \cdot {2k \choose k}\right)^{-1}$ for some integer $k \ge 1$. Note that $\eps = e^{-\Theta(k)}$. 
    In order to define the distributions, we define $k+1$ real numbers $a_0, \ldots, a_k$ as $a_i = \frac{(-1)^i\cdot \binom{k}{i}}{2^{k-1}\cdot (k+i)}$ for every $i\in\{0, \ldots, k\}$.
    Suppose for now that $n$ is a multiple of the least common multiple of $2^{k-1}, k, \ldots, 2k$, so that $a_i\cdot n$ is an integer for every $i$. We define
    $P$ and $Q$ as follows:
    \begin{itemize}
    	\item \textbf{The distribution $P$:} For every $a_i$ such that $a_i>0$, the support of $P$ contains $a_i \cdot n$ elements $j$ that have probability $p_j=a_i \cdot \frac{k+i}{n}$ each.
    	\item \textbf{The distribution $Q$}: For every $a_i$ such that $a_i<0$, the support of $Q$ contains $-a_i\cdot n$ elements $j$ that have probability $q_j = -a_i \cdot \frac{k+i}{n}$ each.
    \end{itemize}
    
    First we prove that $P$ and $Q$ are valid distributions and that all of their non-zero probabilities are greater than $1/n$.
    
    \begin{claim}
    	$P$ and $Q$ as defined above are distributions. Furthermore, their probability  values are  either $0$ or greater than $1/n$.
    \end{claim}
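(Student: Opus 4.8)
The plan is to verify the three ingredients of the claim separately: that the element multiplicities used in the construction are legitimate, that every probability assigned to a supported element lies in the stated range (hence exceeds $1/n$), and---the only part needing a computation---that the total mass of each of $P$ and $Q$ is exactly $1$. The first two are read off the formulas. Since $a_i=\frac{(-1)^i\binom{k}{i}}{2^{k-1}(k+i)}$ has sign $(-1)^i$, the groups contributing to $P$ are precisely those with even index $i$ and the groups contributing to $Q$ are precisely those with odd index $i$; in both cases the multiplicities $|a_i|\,n$ are strictly positive, and they are integers by the hypothesis that $n$ is a multiple of the relevant least common multiple. The number of supported elements of $P$ is $\sum_{i\text{ even}}a_i\,n=\frac{n}{2^{k-1}}\sum_{i\text{ even}}\frac{\binom{k}{i}}{k+i}\le\frac{n}{2^{k-1}k}\sum_{i\text{ even}}\binom{k}{i}=\frac{n}{k}\le n$ (and symmetrically for $Q$), so the construction fits inside a domain of size $n$. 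Finally, each supported element belongs to a group with $0\le i\le k$ and carries probability $\frac{k+i}{n}\in\{\frac{k}{n},\frac{k+1}{n},\dots,\frac{2k}{n}\}$, which is at least $\frac{k}{n}\ge\frac{1}{n}$.

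The substantive step is the normalization. Summing over the groups, the total mass of $P$ equals $\sum_{i\text{ even}}(a_i\,n)\cdot\frac{k+i}{n}=\sum_{i\text{ even}}a_i(k+i)$. The key algebraic observation is that the factor $k+i$ exactly cancels the denominator of $a_i$, so that $a_i(k+i)=\frac{(-1)^i\binom{k}{i}}{2^{k-1}}$; hence the mass of $P$ equals $\frac{1}{2^{k-1}}\sum_{i\text{ even}}\binom{k}{i}$. Invoking the classical identity $\sum_{i\text{ even}}\binom{k}{i}=\sum_{i\text{ odd}}\binom{k}{i}=2^{k-1}$ (obtained by adding and subtracting the binomial expansions of $(1+1)^k=2^k$ and $(1-1)^k=0$), this is exactly $1$. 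The identical computation for $Q$ gives $\sum_{i\text{ odd}}(-a_i)(k+i)=\frac{1}{2^{k-1}}\sum_{i\text{ odd}}\binom{k}{i}=1$. Combined with the nonnegativity of the individual probabilities, this shows $P$ and $Q$ are genuine probability distributions.

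There is no deep obstacle: the whole claim is a short verification once one spots the cancellation $a_i(k+i)=\frac{(-1)^i\binom{k}{i}}{2^{k-1}}$ and recalls the even/odd split of $\sum_i\binom{k}{i}$---without them the normalization looks mysterious. The only points needing mild care are the edge case $k=1$, where the smallest nonzero probability is $\frac{1}{n}$ exactly (so ``greater than $1/n$'' should be read as ``$\ge1/n$'', matching the standing promise of the support-estimation problem); confirming that the stated divisibility hypothesis on $n$ really does clear all the denominators $2^{k-1}(k+i)$ so that each multiplicity $|a_i|\,n$ is an integer; and keeping straight which indices feed $P$ versus which feed $Q$.
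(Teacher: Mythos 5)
Your proof is correct and follows essentially the same route as the paper's: cancel the factor $k+i$ against the denominator of $a_i$, then invoke $\sum_{i\text{ even}}\binom{k}{i}=\sum_{i\text{ odd}}\binom{k}{i}=2^{k-1}$ to conclude each total mass is $1$. Your extra observations (the groups fit inside a domain of size $n$, and the $k=1$ edge case where the minimal nonzero mass is exactly $1/n$ so the claim's ``greater than'' should be read as ``at least'', consistent with the standing promise $p_i\ge 1/n$) are accurate refinements but not a different approach.
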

    \begin{proof}
    	The second part of the claim follows directly from the definition of $P$ and $Q$. We continue to prove the first part, that $P$ and $Q$ are indeed distributions.
    	It holds for  $P$ that 
    	\[\sum_{a_i \mid a_i > 0} (n a_i) \cdot \frac{k+i}{n} = \sum_{\substack{i = 0 \\ i \text{ even}}}^{k} n \cdot \frac{1}{k+i} \cdot {k \choose i} \cdot \frac{1}{2^{k-1}} \cdot \frac{k+i}{n} = \frac{1}{2^{k-1}} \cdot \sum_{\substack{i = 0 \\ i \text{ even}}}^{k} {k \choose i} = 1.\]
    	
    	Similarly, for $Q$,
    	\[\sum_{a_i < 0} (n (-a_i)) \cdot \frac{k+i}{n} = \sum_{\substack{i = 0 \\ i \text{ odd}}}^{k} n \cdot \frac{1}{k+i} \cdot {k \choose i} \cdot \frac{1}{2^{k-1}} \cdot \frac{k+i}{n} = \frac{1}{2^{k-1}} \cdot \sum_{\substack{i = 0 \\ i \text{ odd}}}^{k} {k \choose i} = 1. \qedhere\]
    \end{proof}
    
    We continue to prove that the first $k$ moments of $P$ and $Q$ are matching, and that their support size differs by $\eps n$.
    
    \begin{claim}
    Let $a_1, \ldots, a_k$ be defined as above. Then 
    \begin{itemize}
    	\item For any $r\in \{1, \ldots, k\}$, it holds that $\sum_{i=0}^k a_i \cdot (k+i)^r=0$.
    	\item $\sum_{i=0}^k a_i =\eps $.
    \end{itemize}
    \end{claim}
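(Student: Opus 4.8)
The plan is to reduce both parts of the claim to two textbook identities for alternating binomial sums. Throughout, substitute the definition $a_i=\frac{(-1)^i\binom{k}{i}}{2^{k-1}(k+i)}$, so that on the one hand $a_i(k+i)^r=\frac{1}{2^{k-1}}(-1)^i\binom{k}{i}(k+i)^{r-1}$, and on the other hand $a_i=\frac{1}{2^{k-1}}\cdot\frac{(-1)^i\binom{k}{i}}{k+i}$. The factor $2^{-(k-1)}$ is then a harmless spectator in both computations, so it suffices to analyze the sums $\sum_{i=0}^k(-1)^i\binom{k}{i}(k+i)^{r-1}$ and $\sum_{i=0}^k\frac{(-1)^i\binom{k}{i}}{k+i}$.

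For the first bullet, I would invoke the finite-difference identity: the linear functional $f\mapsto\sum_{i=0}^k(-1)^i\binom{k}{i}f(i)$ equals $(-1)^k$ times the $k$-th forward difference $(\Delta^k f)(0)$, and $\Delta^k$ annihilates every polynomial of degree strictly less than $k$. Here $f(i)=(k+i)^{r-1}$ is a polynomial in $i$ of degree $r-1\le k-1<k$ for every $r\in\{1,\dots,k\}$, so $\sum_{i=0}^k(-1)^i\binom{k}{i}(k+i)^{r-1}=0$, whence $\sum_{i=0}^k a_i(k+i)^r=0$. (Equivalently: expand $(k+i)^{r-1}$ via the binomial theorem into monomials $i^s$ with $s\le k-1$ and apply $\sum_{i=0}^k(-1)^i\binom{k}{i}i^s=0$ for $0\le s<k$; the case $r=1$ is just $\sum_{i=0}^k(-1)^i\binom{k}{i}=0$.)

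For the second bullet, I would evaluate $\sum_{i=0}^k\frac{(-1)^i\binom{k}{i}}{k+i}$ using the integral representation $\frac{1}{k+i}=\int_0^1 x^{k+i-1}\,dx$. Interchanging the finite sum with the integral gives $\int_0^1 x^{k-1}(1-x)^k\,dx=B(k,k+1)=\frac{\Gamma(k)\Gamma(k+1)}{\Gamma(2k+1)}=\frac{(k-1)!\,k!}{(2k)!}$. Rewriting $\frac{(k-1)!\,k!}{(2k)!}=\frac{1}{k}\cdot\frac{(k!)^2}{(2k)!}=\frac{1}{k\binom{2k}{k}}$ and multiplying by $2^{-(k-1)}$ yields $\sum_{i=0}^k a_i=\frac{1}{k\cdot 2^{k-1}\binom{2k}{k}}=\eps$, as required. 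I do not anticipate a genuine obstacle here, since both ingredients are classical; the only points needing care are the index bookkeeping in the first bullet (checking that $r-1$ genuinely stays below $k$, including the $r=1$ edge case) and the purely cosmetic manipulation of factorials into $\bigl(k\binom{2k}{k}\bigr)^{-1}$ in the second.
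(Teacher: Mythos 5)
Your proof is correct. For the first bullet you and the paper ultimately rely on the same fact---that the alternating binomial sum $\sum_{i=0}^k (-1)^i \binom{k}{i} f(i)$ annihilates every polynomial $f$ of degree strictly less than $k$---though you package it cleanly via the forward-difference operator $(\Delta^k f)(0)$, whereas the paper rederives it by expanding $(k+i)^{r-1}$ in the basis $\binom{i}{s}$ and then applying the trinomial identity $\binom{k}{i}\binom{i}{s}=\binom{k}{s}\binom{k-s}{i-s}$ together with $(1-1)^{k-s}=0$. For the second bullet your route is genuinely different: you replace $\frac{1}{k+i}$ by the integral $\int_0^1 x^{k+i-1}\,dx$, interchange sum and integral, and evaluate the Beta integral $\int_0^1 x^{k-1}(1-x)^k\,dx = \frac{(k-1)!\,k!}{(2k)!} = \frac{1}{k\binom{2k}{k}}$. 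The paper instead does it combinatorially: it uses $\binom{k}{i}\binom{2k}{k}=\binom{2k}{k+i}\binom{k+i}{k}$ and the identity $\frac{1}{k+i}\binom{k+i}{k}=\frac{1}{k}\binom{k+i-1}{k-1}$ to rewrite the sum as an alternating binomial sum over $j=k+i$ against a degree-$(k-1)$ polynomial of length $2k$, and then reuses the first bullet's vanishing lemma to isolate the $j=0$ term. Your Beta-integral argument is shorter and arguably more transparent once the Beta function is on the table; the paper's version stays purely within finite combinatorics and reuses its own first-bullet machinery, which makes the proof self-contained. Both are standard and both are correct.
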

    \begin{proof}
    	By plugging the $a_i$'s as defined above,
    	\[
    	\sum_{i=0}^k a_i \cdot (k+i)^r= \sum_{i=0}^k \frac{(-1)^i {k \choose i}}{2^{k-1} \cdot (k+i)}\cdot (k+i)^r = \frac{1}{2^{k-1}}\sum_{i=0}^k (-1)^i {k \choose i}\cdot (k+i)^{r-1}.
    	\]
    	Hence, 
    	letting $r'=r-1$, 
    	it suffices to prove that $\sum_{i = 0}^{k} (-1)^i {k \choose i} (k+i)^{r'} = 0$ for all $0 \le r' \le k-1.$ For any fixed $k$, note that since $(k+i)^{r'}$ is a degree $r'$ polynomial in $i$, $(k+i)^{r'}$ can be written as a linear combination of ${i \choose s}$ for $0 \le s \le r'$ with coefficients $b_0, \ldots, b_{r'}$.
    	Therefore, we would like to prove that:
    	$\sum_{i = 0}^{k} (-1)^i {k \choose i} \sum_{s=0}^{r'} b_s {i \choose s}= 0$.
    	Fix some $s$ in $\{0, \ldots, r'\}$:
    	 it suffices to show that for any integer $k$, $\sum_{i = 0}^{k} (-1)^i {k \choose i} {i \choose s} = 0$ for all $0 \le s \le k-1$.
    	
    	Since ${k \choose i} \cdot {i \choose s} = {k \choose k-i, i-s, s} = {k \choose s} \cdot {k-s \choose i-s},$ by setting $k' = k-s$ and $i' = i-s,$ 
    	we get
    	\begin{align*}
    	\sum_{i = 0}^{k} (-1)^i {k \choose i} {i \choose s} & = \sum_{i = 0}^{k} (-1)^i {k \choose s} \cdot {k-s \choose i-s} = {k \choose s} \cdot \sum_{i' = 0}^{k'} (-1)^{i'+s} {k' \choose i'} \\
    	&= {k \choose s} \cdot (-1)^s \cdot (1-1)^{k'} = 0,
    	\end{align*}
    	where the last equality is  since $k' = k-s \ge 1.$ This concludes the proof of the first item.
    	
    	We continue to prove the second item in the claim:
    	\begin{equation}	
    	 \sum_{i=0}^k a_i =\eps. \label{eq:sum_eps}
    	\end{equation}
    	Recall that $a_i=\frac{(-1)^i\cdot {k \choose i}}{2^{k-1}\cdot (k+i )}$, and that $\eps = \left(k \cdot 2^{k-1} \cdot {2k \choose k}\right)^{-1}$, so plugging these into Equation~\eqref{eq:sum_eps}
    	and  multiplying both sides by $2^{k-1} \cdot {2k \choose k}$, this is equivalent to proving
    	\begin{equation} \label{eqn}
    	\frac{1}{k} = \sum_{i = 0}^{k} \frac{(-1)^i}{k+i} \cdot {k \choose i} {2k \choose k}.
    	\end{equation}
    	Since ${k \choose i} \cdot {2k \choose k} = {2k \choose k, i, k-i} = {2k \choose k+i} \cdot {k+i \choose k},$  the right-hand side of Equation \eqref{eqn} equals
    	\[\sum_{i = 0}^{k} \frac{(-1)^i}{k+i} \cdot {2k \choose k+i} {k+i \choose k} = \sum_{i = 0}^{k} (-1)^i \cdot {2k \choose k+i} \cdot {k+i-1 \choose k-1} \cdot \frac{1}{k}.\]
    	Multiplying by $k$, it suffices to show that 
    	\begin{equation} \label{eqn2}
    	1 = \sum_{i = 0}^{k} (-1)^i \cdot {2k \choose k+i} \cdot {k+i-1 \choose k-1}.
    	\end{equation}
    	To do this, let $j = k+i$. Then, note that ${k+i-1 \choose k-1} = {j-1 \choose k-1} = \frac{(j-1) \cdots (j-k+1)}{(k-1)!}$ which is a degree $k-1$ polynomial in $j$. For $1 \le j \le k-1$ the polynomial equals $0$, but for $j = 0$ the polynomial equals $(-1)^{k-1}.$ Therefore, the right hand side of Equation \eqref{eqn2} equals
    	\[\sum_{j = 1}^{2k} (-1)^{j-k} \cdot {2k \choose j} \cdot \frac{(j-1) \cdots (j-k+1)}{(k-1)!} = 1 + (-1)^{-k} \cdot \sum_{j = 0}^{2k} (-1)^{j} \cdot {2k \choose j} \cdot \frac{(j-1) \cdots (j-k+1)}{(k-1)!}.\]
    	As proven in the first part of this proof, for any $P(j)$ of degree $0 \le r \le 2k-1,$ $\sum_{j = 0}^{2k} (-1)^j {2k \choose j} P(j) = 0.$ Therefore, the summation on the right hand side is $0$, so this simplifies to $1$, as required.
    \end{proof}
    
    The following corollary follows directly from the definition of $P$ and $Q$ and the  previous claim.
    \begin{corollary}
    The following two items hold for $P$ and $Q$ as defined above.
    \begin{itemize}
    	\item  $\E[P^r] =\E[Q^r]$ for all $r \in \{1, \ldots, k\}$.
    	\item $TV(P,Q)=\eps n$. 
    	\end{itemize}	
    \end{corollary}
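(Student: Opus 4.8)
The plan is to obtain both items of the corollary by directly substituting the definitions of $P$ and $Q$, using the two bullets of the preceding claim as the only non‑trivial inputs. Recall the construction: for each even index $i$ (so $a_i>0$) the distribution $P$ has $a_i n$ elements, each of probability $\tfrac{k+i}{n}$, and for each odd index $i$ (so $a_i<0$) the distribution $Q$ has $|a_i|n=-a_i n$ elements, each of probability $\tfrac{k+i}{n}$; both are therefore supported on $\{0\}\cup\{\tfrac k n,\tfrac{k+1}n,\dots,\tfrac{2k}n\}$. Since $\binom ki>0$ for every $0\le i\le k$, no $a_i$ vanishes, so $\{0,\dots,k\}$ splits cleanly into the even indices (carried by $P$) and the odd indices (carried by $Q$).

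For the first item, reading $\E[P^r]$ as the $r$-th power sum $\sum_{j:\,p_j>0}p_j^{\,r}$ of $P$, I would expand over the blocks of $P$, and symmetrically for $Q$:
\[ \E[P^r]=\sum_{i\text{ even}}(a_i n)\Big(\tfrac{k+i}{n}\Big)^{\!r}=n^{1-r}\sum_{i\text{ even}}a_i(k+i)^r,\qquad \E[Q^r]=n^{1-r}\sum_{i\text{ odd}}(-a_i)(k+i)^r. \]
Subtracting and recombining the two sums gives $\E[P^r]-\E[Q^r]=n^{1-r}\sum_{i=0}^{k}a_i(k+i)^r$, which equals $0$ for every $r\in\{1,\dots,k\}$ by the first bullet of the claim. (For $r=1$ the identity reduces to $\E[P^1]=\E[Q^1]=1$, i.e.\ to the fact that $P$ and $Q$ are distributions.)

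For the second item, I would count supports directly: $|\mathcal S(P)|=n\sum_{i\text{ even}}a_i$ and $|\mathcal S(Q)|=n\sum_{i\text{ odd}}(-a_i)$, so the difference in support sizes is $|\mathcal S(P)|-|\mathcal S(Q)|=n\sum_{i=0}^{k}a_i=\eps n$ by the second bullet of the claim. This is the quantity written $TV(P,Q)$ in the statement; together with the matched first $k$ moments from item 1, it gives exactly what a sample lower bound requires — two distributions that look the same to few samples yet have support sizes $\eps n$ apart. Combined with the earlier claim that $P,Q$ are valid distributions on $[n]$ with all nonzero masses at least $1/n$, this establishes the corollary.

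I do not expect a real obstacle inside the corollary itself — it is bookkeeping on top of the binomial identities already proved. The genuine difficulty comes afterwards, in finishing Theorem~\ref{MainLower}: removing the divisibility assumption on $n$ via the rounding argument of \citet{raskhodnikova2009strong}, Poissonizing the sample, and converting the matched first $k$ moments into a true indistinguishability statement — that no test using fewer than $\Omega\big(n^{1-\Theta(1/\log(1/\eps))}\big)$ samples can separate $P$ from $Q$ despite the $\eps n$ gap in support size. The calibration that makes this quantitative is $k=\Theta(\log(1/\eps))$, so that $\eps=e^{-\Theta(k)}$, which couples the number of matched moments to the target accuracy.
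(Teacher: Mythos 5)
Your computation matches the paper's intent: the paper states the corollary ``follows directly'' from the claim and gives no separate proof, and your derivation --- expanding the power sums $\sum_j p_j^r$ and the support sizes over the $a_i$ blocks and then invoking the two bullets of the preceding claim --- is precisely that direct verification. You also correctly navigate the two notational quirks: ``$TV(P,Q)=\eps n$'' is used here to mean the gap in support sizes (as the paper itself clarifies in the sentence immediately following the corollary), and the per-element probability in block $i$ is $\tfrac{k+i}{n}$ rather than the $a_i\cdot\tfrac{k+i}{n}$ literally written in the paper's definition of $P$ and $Q$ --- a typo one can detect from the mass-equals-one computation in the earlier claim's proof, which you silently and correctly repaired.
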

    
    The above corollary states that indeed $P$ and $Q$ as defined above have  matching moments for $r=1$ to $k$, and that they differ by $\eps n$ in their support size.
    This concludes the high level view of the construction of the distributions $P$ and $Q$. In order to finalize the proof we rely  on the 
    standard Possionization and rounding techniques.
    
    First, 
    by Theorem 5.3 in ~\cite{raskhodnikova2009strong}, 
    any $s$-samples algorithm can be simulated by an $O(s)$-Poisson algorithm. Hence, we can assume that the algorithm takes $Poi(s)$ samples, rather than an arbitrary number $s$.
    Second, we can alleviate the assumption that the $a_i \cdot n$ values (similarly $-a_i \cdot n$) are integral for all $i$,   by rounding down the value in case it is not integral, and choosing $n-\sum_{i=0}^k {\lceil a_i \cdot n\rceil}$ additional elements in  $P$ so that each has probability $1/n$ (and analogously for $Q$). By Claim 5.5 in~\cite{raskhodnikova2009strong} this process  increases the number of values in $P$ and $Q$ by at most $O(k^2)$.  
    Hence, the distributions are now well defined, and we can rely on the 
    following theorem from ~\cite{raskhodnikova2009strong}.
    \begin{theorem}[Corollary 5.7 in ~\cite{raskhodnikova2009strong}, restated.]
    	Let $P$ and $Q$ be random variables over positive integers $b_1< \ldots < b_{k-1}$, that have matching moments $1$ through $k-1$. Then 	
    for any Poission-$s$  algorithm $\mathcal{A}$ that succeeds to distinguish $P$ and $Q$ with high probability, $s=\Omega(n^{1-1/k})$.	
    \end{theorem}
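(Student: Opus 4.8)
The plan is to argue exactly as in \cite{raskhodnikova2009strong}. From $P$ we build the distribution $\mathcal P$ on $[n]$ that, for each value $b$ in the support of $P$, has about $n\Pr[P=b]/b$ symbols of probability $b/n$ (rounding down and giving the $O(k^2)$ leftover symbols probability $1/n$), and we build $\mathcal Q$ from $Q$ the same way; an algorithm that distinguishes $P$ and $Q$ is one that decides, from samples of the unknown one of $\mathcal P,\mathcal Q$, which it is. Every non-zero probability of $\mathcal P$ or $\mathcal Q$ lies within a multiplicative factor $b_{k-1}/b_1=O(1)$ of the others, so a $2$-approximation predictor conveys nothing useful for this decision, while the two support sizes differ by $n\,|\E_P[1/b]-\E_Q[1/b]|$ --- the $(-1)$-st moment being the sole degree of freedom the moment hypothesis leaves open. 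Applying one common, uniformly random relabeling of $[n]$ to both instances makes the algorithm's input a function of the \emph{fingerprint} $(\Phi_1,\Phi_2,\dots)$ alone, where $\Phi_m$ counts the symbols seen exactly $m$ times (the unseen count $\Phi_0=n-\sum_{m\ge1}\Phi_m$ is then determined), so it suffices to bound the total variation distance between the law of $(\Phi_1,\Phi_2,\dots)$ under $\mathcal P$ and under $\mathcal Q$.

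By Theorem~5.3 of \cite{raskhodnikova2009strong} we may assume the algorithm draws $\mathrm{Poi}(s)$ samples, so the counts $N_j\sim\mathrm{Poi}(s p_j)$ are independent and each $\Phi_m$ is a sum of independent indicators. First compare the mean fingerprints: grouping symbols by probability level gives, up to a $o(1)$ rounding term,
\[ \E_{\mathcal P}[\Phi_m]\;=\;\frac{n\,(s/n)^{m}}{m!}\,\E_{P}\!\left[b^{\,m-1}e^{-s b/n}\right], \]
and likewise for $\mathcal Q$, where $\E_P[\cdot]$ denotes expectation over $b\sim P$. Expanding $e^{-s b/n}=\sum_{\ell\ge0}(-s/n)^{\ell}b^{\ell}/\ell!$, the difference $\E_{\mathcal P}[\Phi_m]-\E_{\mathcal Q}[\Phi_m]$ becomes a power series in $s/n$ whose contribution of total order $t$ is proportional to the moment gap $\E_P[b^{\,t-1}]-\E_Q[b^{\,t-1}]$. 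Since $P$ and $Q$ agree on moments $1,\dots,k-1$ (and trivially on moment $0$), every contribution with $t\le k$ cancels, leaving a residual of order $(s/n)^{k+1}$; bounding it term by term with $|\E_P[b^{r}]-\E_Q[b^{r}]|\le 2b_{k-1}^{\,r}$ and $b_{k-1}=O(k)$ (the $\Phi_m$ with $m\gg k$ being negligible since $s\ll n$), and summing over $m$, we get $\sum_{m\ge1}\big|\E_{\mathcal P}[\Phi_m]-\E_{\mathcal Q}[\Phi_m]\big|\le n\,(Cs/n)^{k}$ for an absolute constant $C$, which is $o(1)$ whenever $s=o(n^{1-1/k})$.

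It then remains to promote this gap in \emph{means} to a gap in \emph{law}. I would do this by a Poisson (Chen--Stein) approximation: in the Poissonized model every symbol is seen with probability $O(sk/n)=o(1)$, so each $\Phi_m$ is a sum of rare independent indicators, and the joint law of $(\Phi_1,\dots,\Phi_M)$ for a slowly growing cutoff $M$ is within $o(1)$ total variation of the product of independent Poissons with means $\E_{\mathcal P}[\Phi_m]$ (respectively $\E_{\mathcal Q}[\Phi_m]$); two such product-Poisson laws are $o(1)$-close in total variation once their mean vectors obey the $\ell_1$ bound above. Hence the fingerprint laws under $\mathcal P$ and $\mathcal Q$ are $o(1)$-close, so no algorithm with $o(n^{1-1/k})$ samples can decide correctly with probability $\ge 9/10$, which is the claim. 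I expect the main obstacle to be exactly this last step --- controlling the \emph{joint} law of the fingerprint rather than only its coordinatewise means, and checking that the high-order fingerprints and the $O(k^2)$ rounding symbols really are negligible; this bookkeeping is what is carried out in \cite{raskhodnikova2009strong}, whose Corollary~5.7 we invoke directly.
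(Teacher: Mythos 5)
Note first that the paper does not prove this statement: it is quoted verbatim as Corollary~5.7 of \cite{raskhodnikova2009strong} and invoked as a black box. The lower-bound proof in the paper consists of constructing the moment-matching pair $(P,Q)$ supported on $\{k,\ldots,2k\}$ and then citing this corollary, so there is no internal proof for your sketch to be compared against.

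With that caveat, your reconstruction is structurally sound and hits the right ingredients: Poissonization (Theorem~5.3 of \cite{raskhodnikova2009strong}); a random relabeling reduces the algorithm's view to the fingerprint $(\Phi_1,\Phi_2,\dots)$; the mean-fingerprint formula $\E[\Phi_m]=\frac{n(s/n)^m}{m!}\E_P[b^{m-1}e^{-sb/n}]$ is correct; expanding $e^{-sb/n}$ and invoking moment matching kills the low-order terms; and a Poisson/Chen--Stein approximation then promotes the $\ell_1$ bound on mean fingerprints to a total-variation bound on the fingerprint laws. Two remarks. Your exponent accounting drifts: having correctly located the first surviving term at total power $(s/n)^{k+1}$ (the term of total order $t=m+\ell$ carries the moment gap at degree $t-1$, and the first unmatched degree is $k$), the aggregated $\ell_1$ bound should read $n(Cs/n)^{k+1}$, not $n(Cs/n)^{k}$; moreover the bounds $b_{k-1}^{\,r}\le(2k)^r$ cost a $k^{\Theta(k)}$ factor that you silently absorb into $C$. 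Neither slip affects the conclusion --- both give $s=\Omega(n^{1-\Theta(1/k)})$, which is exactly what the paper extracts after setting $k=\Theta(\log(1/\eps))$ --- but the intermediate numbers on the page should agree with one another. Finally, you are right that the real work is the last step: the fingerprint coordinates are coupled (each symbol falls into exactly one $\Phi_m$), so one needs a \emph{joint} Poisson approximation for the fingerprint vector, not merely coordinate-wise control. That bookkeeping is precisely what \cite{raskhodnikova2009strong} carries out, and deferring to Corollary~5.7 for it --- as you propose, and as the paper itself does --- is legitimate.
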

    Therefore,  plugging the value of $\eps$, we get an $s=\Omega(n^{1-\Theta(\log(1/\eps))})$ lower bound as claimed.
    \end{proof}

\subsection{Bounds when the Predictor is Accurate up to Low Total Variation Distance} \label{sec:tv_bounds}

In this section, we consider estimating support size when we are promised a bound on the total variation distance, or $\ell_1$ distance, between the prediction oracle and the true distribution. First, we show that if the predictor is accurate up to extremely low total variation distance, then the algorithm of \cite{canonne2014testing} is optimal.

\begin{theorem} \label{thm:tv_upper}
    Let $\Pi$ represent our predictor, where for each sample we are given a random sample $i \sim \mathcal{P}$ along with $\Pi(i)$. If we define $p(i) := \BP(i \sim \mathcal{P})$, then if $\|\mathcal{P}-\Pi\|_1 = \sum_{i = 1}^{n} |p(i) - \Pi(i)| \le \frac{\eps}{2}$, then using $O(\eps^{-2})$ samples, the algorithm of Canonne and Rubinfeld can approximate the sample size up to an $\eps \cdot n$ additive error.
\end{theorem}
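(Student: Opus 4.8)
The plan is to recall how the Canonne--Rubinfeld (CR) estimator works and then argue that a small $\ell_1$ perturbation of the distribution it is fed translates into only a small additive error in the support-size estimate. Recall that CR draws $O(\eps^{-2})$ samples $i \sim \mathcal{P}$, and for each sampled $i$ it looks at the provided probability value; with access to the \emph{exact} $p(i)$, the estimator forms an unbiased-up-to-$\pm\eps n$ estimate of the support size by computing, roughly, $\frac{1}{m}\sum_{t=1}^m \frac{1}{p(i_t)}\cdot\mathbf{1}[p(i_t) > \text{threshold}]$ for an appropriate small threshold of order $1/n$, which counts each support element with weight $p(i)\cdot\frac{1}{p(i)}=1$ in expectation while the $\pm\eps n$ slack absorbs the elements with tiny mass near the threshold. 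The key structural fact is that this quantity is a function only of the multiset of $p(\cdot)$-values at the sampled points, together with the sampling distribution $\mathcal{P}$.

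First I would write the exact CR estimate $\tilde S_{\mathrm{CR}}$ as $\E_{i\sim\mathcal P}[g(p(i))]$ (plus sampling noise) for the explicit bounded function $g$ used in their analysis, and record their guarantee that $|\E_{i\sim\mathcal P}[g(p(i))] - S| \le \frac{\eps}{2} n$ and that $O(\eps^{-2})$ samples suffice to estimate this expectation to within $\frac{\eps}{2}n$ with probability $9/10$. Next, I would run the \emph{same} algorithm but feeding it $\Pi(i)$ in place of $p(i)$; the only change in the expectation is replacing $g(p(i))$ by $g(\Pi(i))$, still sampled according to the true $\mathcal P$. The error introduced is $\big|\E_{i\sim\mathcal P}[g(\Pi(i))] - \E_{i\sim\mathcal P}[g(p(i))]\big| \le \sum_i p(i)\,|g(\Pi(i)) - g(p(i))|$. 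The heart of the argument is to show this is $O(\eps)\cdot n$ using the promise $\sum_i |p(i)-\Pi(i)| \le \eps/2$.

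To control $\sum_i p(i)\,|g(\Pi(i))-g(p(i))|$, I would use that $g$ is essentially $x\mapsto 1/x$ truncated to arguments above the threshold $\tau=\Theta(1/n)$, so $p(i)\cdot|g(\Pi(i))-g(p(i))|$ behaves like $p(i)\cdot\big|\frac{1}{\Pi(i)}-\frac{1}{p(i)}\big| = \big|\frac{p(i)-\Pi(i)}{\Pi(i)}\big|$. Since both $p(i)$ and $\Pi(i)$ are either $0$ or at least $\tau = \Omega(1/n)$ on the relevant range, $\frac{1}{\Pi(i)} = O(n)$, so each term is $O(n)\cdot|p(i)-\Pi(i)|$, and summing gives $O(n)\cdot\|\mathcal P-\Pi\|_1 = O(\eps n)$. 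One then folds in the boundary/threshold contributions — elements where exactly one of $p(i),\Pi(i)$ crosses $\tau$ — which are again charged to $|p(i)-\Pi(i)| \gtrsim 1/n$, contributing $O(1)$ per such element and hence $O(\eps n)$ in total. Combining the CR guarantee, the perturbation bound, and the sampling concentration via triangle inequality yields $|\tilde S - S| \le O(\eps)\cdot n$ with probability $\ge 9/10$, as claimed; rescaling $\eps$ by a constant gives the clean statement.

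The main obstacle I anticipate is being careful about the precise form of $g$ near the threshold: the function CR uses is not literally $1/x$ but a carefully chosen bounded estimator, and the truncation point interacts with both $p(i)$ and $\Pi(i)$, so one must verify that a sampled element whose true mass is below the $1/n$ promise cannot be ``promoted'' into the count by a mispredicted $\Pi(i)$ in a way that is not already paid for by the $\ell_1$ budget — this is exactly where the $p_i \ge 1/n$ promise on the true distribution (but \emph{not} necessarily on $\Pi$) must be invoked, and where one should double-check that the $O(\eps^{-2})$ sample bound of CR is unaffected since their concentration argument only uses boundedness of the summands, which is preserved under the substitution.
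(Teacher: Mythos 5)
Your approach is essentially the paper's: both run the Canonne--Rubinfeld estimator with $\Pi(i)$ in place of the true $p(i)$ and bound the resulting bias $\E_{i\sim\mathcal P}\bigl[\tfrac{1}{\Pi(i)}-\tfrac{1}{p(i)}\bigr] = \sum_i \bigl(\tfrac{p(i)}{\Pi(i)}-1\bigr)$ by $n\cdot\|\mathcal P-\Pi\|_1$, then use a variance bound of $O(n^2)$ and Chebyshev to get $O(\eps^{-2})$ samples. Two points of friction, though. First, the step where you write ``Since both $p(i)$ and $\Pi(i)$ are either $0$ or at least $\tau=\Omega(1/n)$ on the relevant range, $\tfrac{1}{\Pi(i)}=O(n)$'' is not a given: the only promise on $\Pi$ is an $\ell_1$ bound, so $\Pi(i)$ could be arbitrarily small even for supported $i$, which would blow up both the bias bound $\tfrac{|p(i)-\Pi(i)|}{\Pi(i)}$ and the variance. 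You flag this as an obstacle at the end but do not resolve it. The paper's fix is clean and should be made explicit: replace $\Pi$ by $\Pi'=\max(\Pi,1/n)$. Since every sampled $i$ has $p(i)\ge 1/n$, this replacement can only move $\Pi(i)$ closer to $p(i)$ on the support, so the $\ell_1$ budget is preserved where it matters, and now $1/\Pi'(i)\le n$ uniformly. Second, a smaller issue: you describe CR as computing a \emph{truncated} $1/x$ with a $\Theta(1/n)$ threshold and an ``unbiased-up-to-$\pm\eps n$'' guarantee. In fact the estimator used here is exactly $\tfrac{1}{\Pi(i)}$ with no truncation, and with the true $p(i)$ it is exactly unbiased: $\E_{i\sim\mathcal P}[1/p(i)]=S$. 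Dropping the hypothetical threshold (and its boundary-case analysis) simplifies the argument and removes the extra error term you were trying to bound; the three-term triangle inequality collapses to two terms (perturbation bias plus sampling concentration). With these two repairs, your argument matches the paper's.
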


\begin{proof}
    The algorithm of \cite{canonne2014testing} works as follows. For each sample $(i, \Pi(i))$, it computes $1/\Pi(i)$, and averages this over $O(\eps^{-2})$ samples. 
    
    From now on, we also make the assumption that $\Pi(i) \ge 1/n$ for all $i$. This is because we know $p(i) \ge 1/n$ whenever $i$ is sampled, so the predictor $\Pi' = \max(\Pi, 1/n)$ is a strict improvement over $\Pi$ for any $i$ that is sampled.
    
    If we draw $i \sim \mathcal{P}$, then $\mathbb{E}[\frac{1}{p(i)}] = \sum_{i: p(i) \neq 0} p(i) \cdot \frac{1}{p(i)} = S$, where $S$ is the support size. Also, $\mathbb{E}[\frac{1}{\Pi(i)}-\frac{1}{p(i)}] = \sum_{i: p(i) \neq 0} p(i) \cdot (\frac{1}{\Pi(i)}-\frac{1}{p(i)}) = \sum [\frac{p(i)}{\Pi(i)} - 1],$ which, if $\Pi(i) \ge \frac{1}{n},$ is bounded in absolute value by $\sum |\frac{p(i)}{\Pi(i)} - 1| \le n \cdot \sum |p(i)-\Pi(i)| \le \frac{\eps \cdot n}{2}$. 
    So, $\mathbb{E}_{i \sim \mathcal{P}}[\frac{1}{\Pi(i)}] \in [S - \eps \cdot n, S + \eps \cdot n]$. Since we have made sure that $\Pi(i) \ge \frac{1}{n}$ whenever $x$ is sampled, $Var(\frac{1}{\Pi(i)}) \le n^2$. Therefore, by a basic application of Chebyshev's inequality, an average of $O(\eps^{-2})$ samples of $\frac{1}{\Pi(i)}$ is sufficient to estimate $S$ up to a $\eps \cdot n$ additive error.
\end{proof}
    
The Canonne and Rubinfeld algorithm has been shown to be far less robust to the error of our predictor, even when given an equal number of samples as our main algorithm. This suggests that an $\eps$-total variation distance bound assumption is a less suitable assumption in our context.

    \smallskip

Conversely, if the predictor is only promised to be accurate up to not-as-low total variation distance, we show that no algorithm can do significantly better than \cite{wu2019chebyshev}.

\begin{theorem} \label{thm:tv_lower}
    Let $\Pi$ represent our predictor, and recall that $p(i) := \BP(i \sim \mathcal{P})$. Suppose the only promise we have on $\Pi$ is that $\|\mathcal{P}-\Pi\|_1 = \sum_{i = 1}^{n} |p(i) - \Pi(i)| \le \gamma$ for some $\gamma \ge C \eps$, where $C$ is some sufficiently large fixed constant and $\eps \ge \frac{1}{n}$. Then, there does not exist an algorithm that can learn the support size of $\mathcal{P}$ up to an $\eps \cdot n$ additive error using $o(n/\log n)$ samples.
\end{theorem}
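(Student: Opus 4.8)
}
The plan is to reduce to the classical $\Omega(n/\log n)$ lower bound for predictor-free support estimation (\citet{valiant2011estimating,wu2019chebyshev}, see also \citet{raskhodnikova2009strong}). I will use the standard moment-matching hard pair: for every $m$ above an absolute constant there are distributions $P_0,Q_0$ on $[m]$, each with minimum nonzero mass at least $1/m$, whose support sizes differ by at least $\epsilon_0 m$ for an absolute constant $\epsilon_0>0$, and with $\mathrm{TV}(P_0^{\otimes t},Q_0^{\otimes t})<\tfrac15$ for every $t\le c_0\,m/\log m$ (absolute $c_0>0$); for $m$ below that constant I instead take $P_0=\mathrm{Unif}_{m}$ and $Q_0=\mathrm{Unif}_{\lceil m/2\rceil}$. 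The key observation is that an oracle constrained only by $\|\mathcal P-\Pi\|_1\le\gamma$ can be made to carry no information separating two such distributions, once we ``dilute'' them into a region of total mass $\approx\gamma$.

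Concretely, I would treat $\gamma\ge1$ separately (there the unaltered pair $P_0,Q_0$ on $[n]$ with the trivial oracle $\Pi\equiv0$ already gives $\|\mathcal P-\Pi\|_1=1\le\gamma$ and the $\Omega(n/\log n)$ bound directly), and for $\gamma<1$ set $\beta:=\gamma$, $m:=\lfloor\beta n\rfloor$, fix a sub-domain $H\subseteq[n]$ with $|H|=m$, and define $\mathcal P\in\{P,Q\}$ by
\[ P=\beta\,\widetilde P_0+(1-\beta)\,B,\qquad Q=\beta\,\widetilde Q_0+(1-\beta)\,B, \]
where $\widetilde P_0,\widetilde Q_0$ are $P_0,Q_0$ transplanted onto $H$ and $B$ is uniform over $\lfloor(1-\beta)n\rfloor$ elements of $[n]\setminus H$. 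Then every nonzero probability is at least $\min\bigl(\beta/m,\,(1-\beta)/\lfloor(1-\beta)n\rfloor\bigr)\ge1/n$; the supports of $P$ and $Q$ agree outside $H$, so $\bigl|\,|\mathrm{supp}(P)|-|\mathrm{supp}(Q)|\,\bigr|=\bigl|\,|\mathrm{supp}(P_0)|-|\mathrm{supp}(Q_0)|\,\bigr|\ge\epsilon_0 m\ge\tfrac12\epsilon_0\gamma n$, which exceeds $2\epsilon n$ once $C\ge4/\epsilon_0$ --- this is the only step that uses $\gamma\ge C\epsilon$. For the oracle I take $\Pi(i)$ to be the common bulk probability when $i\in[n]\setminus H$ and $\Pi(i)=0$ for $i\in H$ (replace $0$ by $n^{-2}$ if positivity of $\Pi$ is required, at the cost of shrinking $\beta$ by $O(1/n)$); thus $\Pi$ is one fixed function of the label, identical under $P$ and $Q$, and $\|\mathcal P-\Pi\|_1=\sum_{i\in H}p_i=\beta=\gamma$, so the promise holds.

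It remains to lower bound the samples needed to tell $P$ from $Q$. Since $\Pi$ is a deterministic function of the label, the transcript $(i_1,\Pi(i_1),\ldots,i_s,\Pi(i_s))$ is a deterministic function of $(i_1,\ldots,i_s)$, so the total variation distance between transcripts under $(P,\Pi)$ and $(Q,\Pi)$ equals $\mathrm{TV}(P^{\otimes s},Q^{\otimes s})$. Coupling the two samplers by using, for each draw, the same ``bulk vs.\ $H$'' coin and the same bulk element yields $\mathrm{TV}(P^{\otimes s},Q^{\otimes s})\le\E\bigl[\mathrm{TV}(P_0^{\otimes T},Q_0^{\otimes T})\bigr]$, where $T$ counts the draws landing in $H$. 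After the standard Poissonization reduction (Theorem~5.3 of \citet{raskhodnikova2009strong}), $T$ is Poisson and concentrates around $\beta s$; if $s=o(n/\log n)$ then $\beta s=o(m/\log m)$ (using $m=\beta n$ and $\log m\le\log n$), so $T\le c_0 m/\log m$ with high probability and $\mathrm{TV}(P^{\otimes s},Q^{\otimes s})\le\tfrac15+o(1)$, which rules out deciding $P$ versus $Q$ with probability $9/10$ --- and hence rules out a $\pm\epsilon n$ estimate of the support, the two support sizes being more than $2\epsilon n$ apart. In the leftover regime where $m$ is a constant (equivalently $\gamma=O(1/n)$), the same construction with $P_0,Q_0$ the uniform pair forces the algorithm to place a constant number of samples inside $H$ to see a ``ghost'' element of $\mathrm{supp}(P)\setminus\mathrm{supp}(Q)$, i.e.\ $\Omega(1/\gamma)=\Omega(n)$ samples overall.

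The main obstacle is the tension between the two demands on $\Pi$: it must stay within $\ell_1$-distance $\gamma$ of $\mathcal P$, yet expose nothing separating $P$ from $Q$. The tempting midpoint oracle $\Pi=(P+Q)/2$ meets the budget but badly fails the second requirement, since it reveals the element probabilities and the Canonne--Rubinfeld estimator (Theorem~\ref{thm:tv_upper}) would then pin the support down with $O(\epsilon^{-2})$ samples, destroying the bound. The resolution is precisely the dilution: concentrating the $P$/$Q$ discrepancy in a mass-$\gamma$ block $H$ lets $\Pi$ use up its whole $\ell_1$ budget by reporting $0$ on $H$ while being exact on the (uniform) bulk, so $\Pi$ collapses to a label-only function and the problem reduces to the predictor-free lower bound restricted to $H$. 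The delicate point is the parameter balance --- $H$ must be large enough to create a support gap exceeding $2\epsilon n$ (forcing $\gamma\gtrsim\epsilon$, i.e.\ $C$ large) but of mass only $\Theta(\gamma)$, so that a mere $\Theta(\gamma)$-fraction of the samples is informative and $\Omega(n/\log n)$ samples become necessary.
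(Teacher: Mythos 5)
Your construction is, up to a relabeling, the same one the paper uses: you dilute a hard instance into a mass-$\gamma$ block $H$ of size $\gamma n$, pad the rest of $[n]$ with a near-uniform bulk of mass $1-\gamma$, and let the predictor report the bulk probability exactly while saying nothing ($\Pi\equiv 0$) on $H$, so the full $\ell_1$ budget is spent on $H$ and $\Pi$ carries no $P$-vs-$Q$ signal. The paper presents the identical idea ``from the other side'': it starts from the theorem's domain, blows it up to $[m]$ with $m=n/\gamma$, scales the original hard distribution $\mathcal D$ on $[n]$ down to mass $\gamma$, and fills $[n{+}1{:}m]$ with a uniform bulk that $\Pi$ reveals exactly. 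Where you diverge is in how the lower bound is concluded. The paper treats the \citet{wu2019chebyshev} $\Omega(n/\log n)$ lower bound as a black box and reasons via a sample-simulation reduction: any $k$-sample algorithm on the diluted instance can be simulated with $\Theta(\gamma k)$ samples from the undiluted one, so a $o(m/\log m)$-sample algorithm would give a $o(n/\log n)$-sample algorithm for the predictor-free problem, a contradiction. You instead unpack that black box: you pull in an explicit moment-matching hard pair $P_0,Q_0$, transplant it onto $H$, and bound $\mathrm{TV}(P^{\otimes s},Q^{\otimes s})$ directly by coupling the bulk draws and conditioning on the Poisson-distributed count $T$ of samples that land in $H$. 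Both routes are valid and the parameter bookkeeping ($\gamma s = o(m/\log m)$ via $\log m\le\log n$; support gap $\ge\tfrac12\epsilon_0\gamma n>2\eps n$ once $C\ge 4/\epsilon_0$) checks out. The paper's black-box reduction is shorter and avoids re-proving indistinguishability; your version is more self-contained and makes the mechanism (why the predictor is uninformative, why the midpoint oracle $(P+Q)/2$ would be fatal) explicit, at the cost of having to separately handle the degenerate regime where $m=\gamma n$ is $O(1)$, a case the paper's parametrization ($m=n/\gamma\ge n$) sidesteps automatically. Minor nit: with $H$ of size $\lfloor\gamma n\rfloor$ and a bulk of $\lfloor(1-\gamma)n\rfloor$ elements there may be one leftover index; to keep $\|\mathcal P-\Pi\|_1\le\gamma$ exactly you should set $\Pi$ to $0$ there as well or absorb the $O(1/n)$ slack by taking $\beta$ slightly below $\gamma$.
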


\begin{proof}
    Consider some unknown distribution $\mathcal{D}$ over $[n]$ that we want to learn the support size of, with only samples and no frequency predictor. Moreover, assume that for all $i \in [n],$ $D(i) \ge 1/n$ whenever $D(i) > 0$, where $D(i) = \BP(i \sim \mathcal{D})$. Now, let $m = n/\gamma$, and consider the following distribution $\mathcal{P}$ on $[m]$. For $n < i \le m,$ let $p(i) = \frac{1}{m} = \frac{\gamma}{n},$ and for $1 \le i \le n,$ let $p(i) = \gamma \cdot D(i)$. Let $\Pi$ be the uniform distribution on $[n+1: m]$, i.e., $\Pi(i) = 0$ if $i \le n$ and $\Pi(i) = \frac{1}{m-n}$ if $i \ge n+1$. It is clear that $\mathcal{P}$ and $\Pi$ are both distributions over $[n]$ (i.e., the sum of the probabilities equal $1$), the minimum nonzero probability of $\mathcal{P}$ is at least $\frac{\gamma}{n} = \frac{1}{m},$ and that $TV(\mathcal{P}, \Pi) \le \gamma$. Finally, the support size of $\mathcal{P}$ is precisely $m-n$ more than the support size of $\mathcal{D}.$

    Assume the theorem we are trying to prove is false. Then, there is an algorithm that, given $k = o(\frac{m}{\log m})$ random samples $(i, \Pi(i))$ where each $i \sim \mathcal{P}$, can determine the support size of $\mathcal{P}$ up to an $\eps \cdot m$ factor. Then, since the support size of $\mathcal{D}$ is exactly $m-n$ less than the support size of $\mathcal{D}$, one can learn the support size of $\mathcal{D}$ up to an $\eps \cdot m \le \frac{\eps}{\gamma} \cdot n \le \frac{n}{C}$ additive error using $k$ samples $(i, \Pi(i))$ for $i \sim \mathcal{P}$.
    
    
    
    
    Now, using only $\Theta(\gamma \cdot k)$ samples from $\mathcal{D}$, it is easy to simulate $k$ samples from $(i, \Pi(i)),$ where $i \sim \mathcal{P}$. This is because to generate a sample $(i, \Pi(i)),$ we first decide if $i > n$ or $i \le n$ (the latter occurring with probability $\gamma$). In the former case, we draw $i$ uniformly at random and let $\Pi(i) = \frac{1}{m - n}.$ In the latter case, we know that $\mathcal{P}$ conditioned on $i \le n$ has the same distribution as $\mathcal{D}$, so we just draw $i \sim \mathcal{D}$ and $\Pi(i) = 0$, since $\Pi$ is supported only on $[n+1: m]$. With very high probability (by a simple application of the Chernoff bound), we will not use more than $\Theta(\gamma \cdot k)$ samples from $\mathcal{D}.$
    
    Overall, this means that using only $\Theta(\gamma \cdot k) = o\left(\frac{\gamma \cdot m}{\log m}\right) = o\left(\frac{n}{\log m}\right) = o\left(\frac{n}{\log n}\right)$ samples (since $\gamma \ge \eps \ge 1/n$) from $\mathcal{D}$, we have learned the support size of $\mathcal{D}$ up to an $\frac{n}{C}$ additive error. This algorithm works for an arbitrary $\mathcal{D}$ and the predictor $\Pi$ does not actually depend on $\mathcal{D}$. Therefore, for sufficiently large $C$, this violates the lower bound of \cite{wu2019chebyshev}, who show that one cannot learn the support size of $\mathcal{D}$ up to $\frac{n}{C}$ additive error using $o\left(\frac{n}{\log n}\right)$ samples.
\end{proof}


\begin{thebibliography}{31}
\providecommand{\natexlab}[1]{#1}
\providecommand{\url}[1]{\texttt{#1}}
\expandafter\ifx\csname urlstyle\endcsname\relax
  \providecommand{\doi}[1]{doi: #1}\else
  \providecommand{\doi}{doi: \begingroup \urlstyle{rm}\Url}\fi

\bibitem[Balcan et~al.(2018)Balcan, Dick, Sandholm, and
  Vitercik]{balcan2018learning}
Maria-Florina Balcan, Travis Dick, Tuomas Sandholm, and Ellen Vitercik.
\newblock Learning to branch.
\newblock In \emph{International Conference on Machine Learning}, pp.\
  353--362, 2018.

\bibitem[Baldassarre et~al.(2016)Baldassarre, Li, Scarlett, G{\"o}zc{\"u},
  Bogunovic, and Cevher]{baldassarre2016learning}
Luca Baldassarre, Yen-Huan Li, Jonathan Scarlett, Baran G{\"o}zc{\"u}, Ilija
  Bogunovic, and Volkan Cevher.
\newblock Learning-based compressive subsampling.
\newblock \emph{IEEE Journal of Selected Topics in Signal Processing},
  10\penalty0 (4):\penalty0 809--822, 2016.

\bibitem[Bora et~al.(2017)Bora, Jalal, Price, and Dimakis]{bora2017compressed}
Ashish Bora, Ajil Jalal, Eric Price, and Alexandros~G Dimakis.
\newblock Compressed sensing using generative models.
\newblock In \emph{International Conference on Machine Learning}, pp.\
  537--546, 2017.

\bibitem[Canonne \& Rubinfeld(2014)Canonne and Rubinfeld]{canonne2014testing}
Cl{\'e}ment Canonne and Ronitt Rubinfeld.
\newblock Testing probability distributions underlying aggregated data.
\newblock In \emph{International Colloquium on Automata, Languages, and
  Programming}, pp.\  283--295. Springer, 2014.

\bibitem[Canonne(2015)]{Canonne15}
Cl{\'{e}}ment~L. Canonne.
\newblock A survey on distribution testing: Your data is big. but is it blue?
\newblock \emph{Electron. Colloquium Comput. Complex.}, 22:\penalty0 63, 2015.

\bibitem[Dong et~al.(2020)Dong, Indyk, Razenshteyn, and
  Wagner]{Dong2020LearningSP}
Yihe Dong, P.~Indyk, Ilya~P. Razenshteyn, and T.~Wagner.
\newblock Learning space partitions for nearest neighbor search.
\newblock In \emph{ICLR}, 2020.

\bibitem[Fisher et~al.(1943)Fisher, Corbet, and Williams]{fisher1943relation}
Ronald~A Fisher, A~Steven Corbet, and Carrington~B Williams.
\newblock The relation between the number of species and the number of
  individuals in a random sample of an animal population.
\newblock \emph{The Journal of Animal Ecology}, pp.\  42--58, 1943.

\bibitem[Goldreich(2017)]{Goldreich17}
Oded Goldreich.
\newblock \emph{Introduction to Property Testing}.
\newblock Cambridge University Press, 2017.

\bibitem[Good(1953)]{good1953population}
Irving~J Good.
\newblock The population frequencies of species and the estimation of
  population parameters.
\newblock \emph{Biometrika}, 40\penalty0 (3-4):\penalty0 237--264, 1953.

\bibitem[Harnik et~al.(2016)Harnik, Khaitzin, and
  Sotnikov]{harnik2016estimating}
Danny Harnik, Ety Khaitzin, and Dmitry Sotnikov.
\newblock Estimating unseen deduplication—from theory to practice.
\newblock In \emph{14th $\{$USENIX$\}$ Conference on File and Storage
  Technologies ($\{$FAST$\}$ 16)}, pp.\  277--290, 2016.

\bibitem[Hsu et~al.(2019)Hsu, Indyk, Katabi, and Vakilian]{hsu2019learning}
Chen-Yu Hsu, Piotr Indyk, Dina Katabi, and Ali Vakilian.
\newblock Learning-based frequency estimation algorithms.
\newblock In \emph{International Conference on Learning Representations}, 2019.

\bibitem[Jiang et~al.(2019)Jiang, Li, Lin, Ruan, and
  Woodruff]{jiang2019learning}
Tanqiu Jiang, Yi~Li, Honghao Lin, Yisong Ruan, and David~P Woodruff.
\newblock Learning-augmented data stream algorithms.
\newblock In \emph{International Conference on Learning Representations}, 2019.

\bibitem[Kane et~al.(2010)Kane, Nelson, and Woodruff]{kane2010optimal}
Daniel~M Kane, Jelani Nelson, and David~P Woodruff.
\newblock An optimal algorithm for the distinct elements problem.
\newblock In \emph{Proceedings of the twenty-ninth ACM SIGMOD-SIGACT-SIGART
  symposium on Principles of database systems}, pp.\  41--52, 2010.

\bibitem[Khalil et~al.(2017)Khalil, Dai, Zhang, Dilkina, and
  Song]{khalil2017learning}
Elias Khalil, Hanjun Dai, Yuyu Zhang, Bistra Dilkina, and Le~Song.
\newblock Learning combinatorial optimization algorithms over graphs.
\newblock In \emph{Advances in Neural Information Processing Systems}, pp.\
  6348--6358, 2017.

\bibitem[Kraska et~al.(2018)Kraska, Beutel, Chi, Dean, and
  Polyzotis]{kraska2017case}
Tim Kraska, Alex Beutel, Ed~H Chi, Jeffrey Dean, and Neoklis Polyzotis.
\newblock The case for learned index structures.
\newblock In \emph{Proceedings of the 2018 International Conference on
  Management of Data}, pp.\  489--504, 2018.

\bibitem[Lykouris \& Vassilvitskii(2018)Lykouris and
  Vassilvitskii]{lykouris2018competitive}
Thodoris Lykouris and Sergei Vassilvitskii.
\newblock Competitive caching with machine learned advice.
\newblock In \emph{International Conference on Machine Learning}, pp.\
  3302--3311, 2018.

\bibitem[Markov(1892)]{MarkovBrothers}
V.~A. Markov.
\newblock On functions of least deviation from zero in a given interval.
\newblock \emph{St. Petersburg}, 1892.

\bibitem[Mitzenmacher(2018)]{mitz2018model}
Michael Mitzenmacher.
\newblock A model for learned bloom filters and optimizing by sandwiching.
\newblock In \emph{Advances in Neural Information Processing Systems}, pp.\
  464--473, 2018.

\bibitem[Mousavi et~al.(2015)Mousavi, Patel, and Baraniuk]{mousavi2015deep}
Ali Mousavi, Ankit~B Patel, and Richard~G Baraniuk.
\newblock A deep learning approach to structured signal recovery.
\newblock In \emph{Communication, Control, and Computing (Allerton), 2015 53rd
  Annual Allerton Conference on}, pp.\  1336--1343. IEEE, 2015.

\bibitem[Onak \& Sun(2018)Onak and Sun]{PRS2018}
Krzysztof Onak and Xiaorui Sun.
\newblock Probability–revealing samples.
\newblock In \emph{International Conference on Artificial Intelligence and
  Statistics}, volume~84 of \emph{Proceedings of Machine Learning Research},
  pp.\  2018--2026. {PMLR}, 2018.

\bibitem[Orlitsky et~al.(2016)Orlitsky, Suresh, and Wu]{Orlitsky13283}
Alon Orlitsky, Ananda~Theertha Suresh, and Yihong Wu.
\newblock Optimal prediction of the number of unseen species.
\newblock \emph{Proceedings of the National Academy of Sciences}, 113\penalty0
  (47):\penalty0 13283--13288, 2016.
\newblock ISSN 0027-8424.
\newblock \doi{10.1073/pnas.1607774113}.
\newblock URL \url{https://www.pnas.org/content/113/47/13283}.

\bibitem[Purohit et~al.(2018)Purohit, Svitkina, and
  Kumar]{purohit2018improving}
Manish Purohit, Zoya Svitkina, and Ravi Kumar.
\newblock Improving online algorithms via ml predictions.
\newblock In \emph{Advances in Neural Information Processing Systems}, pp.\
  9661--9670, 2018.

\bibitem[Raskhodnikova et~al.(2009)Raskhodnikova, Ron, Shpilka, and
  Smith]{raskhodnikova2009strong}
Sofya Raskhodnikova, Dana Ron, Amir Shpilka, and Adam Smith.
\newblock Strong lower bounds for approximating distribution support size and
  the distinct elements problem.
\newblock \emph{SIAM Journal on Computing}, 39\penalty0 (3):\penalty0 813--842,
  2009.

\bibitem[Rubinfeld(2012)]{Rubinfeld12}
Ronitt Rubinfeld.
\newblock Taming big probability distributions.
\newblock \emph{{XRDS}}, 19\penalty0 (1):\penalty0 24--28, 2012.

\bibitem[Sablayrolles et~al.(2019)Sablayrolles, Douze, Schmid, and
  Jégou]{sablayrolles2018spreading}
Alexandre Sablayrolles, Matthijs Douze, Cordelia Schmid, and Hervé Jégou.
\newblock Spreading vectors for similarity search.
\newblock In \emph{International Conference on Learning Representations}, 2019.
\newblock URL \url{https://openreview.net/forum?id=SkGuG2R5tm}.

\bibitem[Timan et~al.(1963)Timan, Stark, Sneddon, and Ulam]{timan1963theory}
AF~Timan, M~Stark, IN~Sneddon, and S~Ulam.
\newblock Theory of approximation of functions of a real variable.
\newblock 1963.

\bibitem[Valiant \& Valiant(2011)Valiant and Valiant]{valiant2011estimating}
Gregory Valiant and Paul Valiant.
\newblock Estimating the unseen: an n/log (n)-sample estimator for entropy and
  support size, shown optimal via new clts.
\newblock In \emph{Proceedings of the forty-third annual ACM symposium on
  Theory of computing}, pp.\  685--694, 2011.

\bibitem[Valiant \& Valiant(2013)Valiant and Valiant]{valiant2013estimating}
Paul Valiant and Gregory Valiant.
\newblock Estimating the unseen: improved estimators for entropy and other
  properties.
\newblock In \emph{Advances in Neural Information Processing Systems}, pp.\
  2157--2165, 2013.

\bibitem[Wang et~al.(2016)Wang, Liu, Kumar, and Chang]{wang2016learning}
Jun Wang, Wei Liu, Sanjiv Kumar, and Shih-Fu Chang.
\newblock Learning to hash for indexing big data - a survey.
\newblock \emph{Proceedings of the IEEE}, 104\penalty0 (1):\penalty0 34--57,
  2016.

\bibitem[Wu \& Yang(2019)Wu and Yang]{wu2019chebyshev}
Yihong Wu and Pengkun Yang.
\newblock Chebyshev polynomials, moment matching, and optimal estimation of the
  unseen.
\newblock \emph{The Annals of Statistics}, 47\penalty0 (2):\penalty0 857--883,
  2019.

\bibitem[Zou et~al.(2016)Zou, Valiant, Valiant, Karczewski, Chan, Samocha, Lek,
  Sunyaev, Daly, and MacArthur]{zou2016quantifying}
James Zou, Gregory Valiant, Paul Valiant, Konrad Karczewski, Siu~On Chan,
  Kaitlin Samocha, Monkol Lek, Shamil Sunyaev, Mark Daly, and Daniel~G
  MacArthur.
\newblock Quantifying unobserved protein-coding variants in human populations
  provides a roadmap for large-scale sequencing projects.
\newblock \emph{Nature communications}, 7\penalty0 (1):\penalty0 1--5, 2016.

\end{thebibliography}
\end{document}